\newcommand{\smallsection}[1]{\textbf{#1.~~~~}}
\definecolor{niceblue}{rgb}{0.10, 0.14, 0.76} 
\definecolor{nicered}{rgb}{0.70, 0.0, 0.0} 
\title{Optimal Embedding Learning Rate in LLMs: \\
The Effect of Vocabulary Size}
\author{%
  Soufiane Hayou\thanks{Corresponding author: \texttt{hayou@berkeley.edu}}\\
  Simons Institute\\
  UC Berkeley \\
  \And
  Liyuan Liu \\
  Microsoft Research\\
  % Address \\
  % \texttt{email} \\
  % \AND
  % Coauthor \\
  % Affiliation \\
  % Address \\
  % \texttt{email} \\
  % \And
  % Coauthor \\
  % Affiliation \\
  % Address \\
  % \texttt{email} \\
  % \And
  % Coauthor \\
  % Affiliation \\
  % Address \\
  % \texttt{email} \\
}
\date{}
\newcommand{\reals}{\mathbb{R}}
\newcommand{\normal}{\mathcal{N}}
\newcommand{\data}{\mathcal{D}}
\newcommand{\Sign}{\mathcal{S}}
\newcommand{\E}{\mathbb{E}}
\newcommand{\bigO}{\mathcal{O}}
\newcommand\munderbar[1]{%
  \underaccent{\bar}{#1}}
\newtheorem{thm}{Theorem}
\newtheorem{lemma}{Lemma}
\newtheorem{assumption}{Assumption}
\begin{document}

\maketitle

\begin{abstract}
Pretraining large language models is a costly process. To make this process more efficient, several methods have been proposed to optimize model architecture/parametrization and hardware use. On the parametrization side, $\mu P$ (Maximal Update Parametrization) parametrizes model weights and learning rate (LR) in a way that makes hyperparameters (HPs) transferable with width (embedding dimension): HPs can be tuned for a small model and used for larger models without additional tuning. While $\mu$P showed impressive results in practice, recent empirical studies have reported conflicting observations when applied to LLMs. One limitation of the theory behind $\mu$P is the fact that input dimension (vocabulary size in LLMs) is considered fixed when taking the width to infinity. This is unrealistic since vocabulary size is generally much larger than width in practice. In this work, we provide a theoretical analysis of the effect of vocabulary size on training dynamics, and subsequently show that as vocabulary size increases, the training dynamics \emph{interpolate between the $\mu$P regime and another regime that we call Large Vocab (LV) Regime}, where optimal scaling rules are different from those predicted by $\mu$P. Our analysis reveals that in the LV regime, the optimal embedding LR to hidden LR ratio should roughly scale as $\Theta(\sqrt{width})$, surprisingly close to the empirical findings previously reported in the literature, and different from the $\Theta(width)$ ratio predicted by $\mu$P. We conduct several experiments to validate our theory, and pretrain a 1B model from scratch to show the benefit of our suggested scaling rule for the embedding LR.
\end{abstract}

\section{Introduction}
Large Language Models (LLMs) require intensive pretraining on massive amounts of data before progressing to the post-training stage where they are finetuned for specific applications. During pretraining, model weights are updated with optimizers such as Adam \citep{kingma2017adammethodstochasticoptimization} to minimize the next-token prediction loss. This process involves tuning several hyperparameters (HPs) in the model such as initialization, learning rate, batch size -- a process that becomes significantly expensive as model size increases because optimal HPs are \emph{highly sensitive to scale}. 

To address this challenge, researchers have studied how optimal HPs change with scale. One of the earliest works in this direction can be attributed to \citet{Neal1996} where the author studied how initialization should be scaled as network width increases. Subsequent research has examined the impact of the activation functions, learning rates, batch sizes, and other factors \citep{hayou19activation, hayou2022on, yang2022tensor, zhang2025doescriticalbatchsize}. Additional studies have studied how these HPs should scale with depth \cite{deepinfoprop2017, hayou2021stableresnet, yang2023tensorprogramsvifeature, bordelon2023depthwisehyperparametertransferresidual}.

\paragraph{Maximal Update Parametrization \citep{yang2022tensor} and related works.} $\mu$P  provides scaling rules with respect to width for initialization and learning rate in general neural network architecture. For instance, with Adam optimizer, $\mu$P parametrizes the learning rate for the hidden layers as $\eta \times d^{-1}$ where $d$ is model width (embedding dimension). With this parametrization, the constant $\eta$ can be tuned for small model and used for larger models without further tuning (HP transfer).

However, when used for LLM pretraining, conflicting empirical results regarding the benefits of $\mu$P have been reported in the literature \citep{falcon2023falconseriesopenlanguage, blake2025umupunitscaledmaximalupdate, lingle2025empiricalstudymuplearning, jordan2024muon, everett2024scalingexponentsparameterizationsoptimizers}. A potential explanation for these conflicting observations is that vocabulary size was not incorporated in the theoretical analysis. Indeed, a significant limitation of the theory behind $\mu$P (Tensor Programs) is that it assumes fixed vocabulary size while studying feature learning in the large width limit. This assumption is unrealistic, as in practice, vocabulary size is often much larger than width (see e.g. \cite{2024llama3herdmodels, abdin2024phi, gemmateam2025gemma3technicalreport}). Consequently, it is unclear whether $\mu$P scaling rules remain optimal in the case of LLMs. Intuitively, vocabulary size would most significantly impact embedding and projection layers, and it should be expected that optimal learning rates for these two modules would shift as vocabulary size increases. This is the focus of our paper, where we develop a theoretical framework to study the impact of vocabulary size on training dynamics. 

Our contributions are three-fold:
\begin{itemize}
    \item Using a simple theoretical framework, we provide a rigorous analysis of the impact of vocabulary size on feature learning dynamics, and demonstrate the existence of two regimes: (i) the $\mu$P regime, where vocabulary size is fixed and only width (embedding dimension) $d$ increases to infinity, and 2) Large Vocabulary (LV) regime where vocabulary size also increases, in which case the $\mu$P scaling rules for embedding layer LR are suboptimal. 
    
    \item Our theory suggests that the ratio of embedding layer LR (LR$_{emb}$) to hidden layers LR (LR$_{hidden}$) should scale roughly as  LR$_{emb}$/LR$_{hidden}=\Theta_d(\sqrt{d})$, in contrast to $\mu$P prediction of $\Theta_d(d)$ ratio. As vocabulary size increases, the training dynamics interpolate between the $\mu$P regime and the LV regime.
    
    \item We hypothesize that the LV regime is more adapted to modern LLM pretraining and conduct extensive experiments to validate our theoretical findings. Notably, we pretrain a 1B model from scratch to show that setting LR$_{emb}$/LR$_{hidden}\approx\sqrt{d}$ leads to significant improvement in large scale pretraining.
\end{itemize}

\section{Neural Parametrizations and Hyperparameter Transfer}
As we scale neural networks, we must adapt hyperparameters in a scale-dependent manner. When model width $d$ increases, both initialization variance and learning rate should be adapted to avoid numerical instabilities and ensure efficient learning. For instance, the variance of initialization weights in hidden layers should scale as $d^{-1}$ to prevent excessively large activations as model width increase (e.g. He init \cite{he2015delvingdeeprectifierssurpassing}). To derive such scaling rules, a principled approach consists of analyzing statistical properties of the training dynamics as $d$ grows, then adjusting initialization, learning rate, and architecture to achieve desirable properties in the limit $d \to \infty$ \citep{deepinfoprop2017, hayou19activation, yang2019scaling}. 

$\mu$P provides scaling exponents for initialization and learning rate. The goal is to achieve ``maximal'' feature learning and avoid lazy training when width is significantly large.\footnote{Lazy training \citep{chizat2020lazytrainingdifferentiableprogramming} refers to a phenomenon where feature learning becomes suboptimal as width increases. This is primarily associated with the Neural Tangent Kernel \citep{jacot2020neuraltangentkernelconvergence}.} One of the key benefits of $\mu$P is HP transfer, where optimal HPs transfer with width: we can tune HPs on a small model and apply the same HPs to larger models without further tuning. This is particularly valuable when training large models, as it significantly reduces training costs. While $\mu$P was established for general neural architectures,\footnote{Under the assumption that all neural computations can be represented as tensor programs.} our focus in this paper is specifically on LLMs, and particularly decoder-only transformer architectures. 

A transformer model of depth $L$ and width $d$ is given by 
\begin{equation}
\begin{cases}
Y_{emb}(X) = X \,E,\\
Y_l(X) = Y_{l-1}(x) + \mathcal{F}(Y_{l-1}(X), \theta_l), l\in[1:L],\\
Y_{proj}(X) = Y_{L}(X) W,
\end{cases}
\end{equation}
where $E \in \reals^{m\times d}$ is the embedding matrix (vocabulary size $m$), $X \in \reals^{T \times m}$ is a tokenized input sequence of length $T$ (each row consists of a one-hot vector), $W \in \reals^{d \times m}$ is the projection matrix, and $\mathcal{F}(.)$ is a mapping that consists of Attention and MLP blocks with weights $\theta_l$. For LLMs, the transformer is usually trained to minimize the next token prediction loss, which extracts the last row of $Y_{proj}(X)$ and compares it against ground truth next token.

\paragraph{Notation.} Hereafter, $d$ will always denote model width. As $d$ grows, given sequences $c_d \in \reals$ and $b_d \in \reals^+$, we write $c_d = \bigO(b_d)$ to refer to $c_d < \kappa d_b$  for some constant $\kappa > 0$. We write $c_d = \Theta(b_d)$ if we have $\kappa_1 b_d\leq c_d \leq \kappa_2 b_d$ for some $\kappa_1, \kappa_2 >0$. For vector sequences $c_d = (c_d^i)_{1 \leq i \leq k} \in \reals^k$ (for some $k >0$), we write $c_d = \bigO(b_d)$ when $c_d^i = \bigO(b_d^i)$ for all $i \in [k]$, and same holds for other asymptotic notation. Finally, when the sequence $c_d$ is a vector of random variables, asymptotics are defined in the sense of the second moment ($L_2$ norm). For a vector $z\in \reals^d$, $\|z\| = \left(\sum_{i=1}^d z_i^2\right)^{1/2}$ refers to the euclidean norm.

\paragraph{Neural parametrization.} In the context of model scaling, a parametrization specifies how each HP in the model should depend on the scalable dimension (scaling exponents). For instance, when scaling width $d$, initialization variance conventionally scale as $d^{-1}$ \citep{he2015delvingdeeprectifierssurpassing}. $\mu$P suggests similar scaling rules for learning rate. While various HPs are sensitive to scale, including batch size and activation function, our focus in this work is primarily on learning rate adaptation.

\paragraph{Mechanisms of $\mu$P.} The idea behind $\mu$P originates from the infinite-width theory of neural networks. Specifically, the literature on the neural tangent kernel \citep{jacot2020neuraltangentkernelconvergence} showed that certain parameterizations lead to a kernel regime in the infinite-width limit. In other words, increasing model width with such parametrizations leads to subopotimal for feature learning. The authors of $\mu$P reverse-engineered this problem by asking: how can we maximize feature learning in the infinite-width limit? Additional details on infinite-width theory of neural networks are provided in \cref{app:add_details_infinite_width}.

Hereafter, we use the subscript $t$ to denote the optimizer step. In the context of $\mu$P, feature learning is measured by the change in features after one step, and we say that we have ``maximal'' feature learning if the following holds
$$
\Delta_t Y := Y_{t+1} - Y_{t} = \Theta_d(1),
$$
for any neuron $Y$ in the model, e.g. $Y_l(X)_{ij}$ ($j$-th neuron in the $i$-th hidden vector in the sequence). $\mu$P was derived by enforcing this asymptotic behavior. We summarize $\mu$P rules for initialization and LR (Adam optimizer) in \Cref{tab:mup}.
\begin{table}[ht]
    \centering
    \caption{$\mu$P scaling rules for initialization and learning rate.}
    \begin{tabular}{c|c|c|c}
    \Xhline{2\arrayrulewidth}
        &  Embedding weights  &  Output weights & Hidden weights \\\hline
       Init. Var. & $1$ & $d^{-2}$ & $d^{-1}$ \\\hline
       LR (Adam)  & $\eta$ & $\eta d^{-1}$ & $\eta d^{-1}$\\\Xhline{2\arrayrulewidth}
    \end{tabular}
    \label{tab:mup}
\end{table}

The constant $\eta>0$ is a tunable HP that does not depend on width $d$.\footnote{Think of $\eta$ as the constant in $\Theta_d(.)$.}. As we scale $d$, $\mu$P suggests that the optimal LR for hidden and output weights should be expected to scale as $d^{-1}$, while the optimal LR for embedding layer should remain approximately constant.

In its ``raw'' form, $\mu$P scaling exponents are given by asymptotic results such as LR$_{hidden}=\Theta_d(d^{-1})$ and LR$_{emb} = \Theta_d(1)$ which suggests using different constants $\eta$'s for each cell in \Cref{tab:mup}.\footnote{The notation $a_d = \Theta_d(b_d)$ means that roughly $a_d$ is of the same order as $b_d$ when $d$ is large.} However, since the cost of tuning multiple HPs grows exponentially with their number, such tuning process becomes costly even for relatively small models. Therefore practitioners usually use the same constant $\eta$ across layers \citep{lingle2025empiricalstudymuplearning}. This is also true for standard parametrization (SP) where no width exponents are included in the learning rate.

\begin{wrapfigure}{r}{0.28\textwidth}
  \begin{center}
    \includegraphics[width=0.28\textwidth]{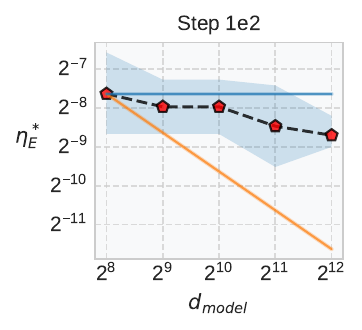}\\
    \vspace{-0.2cm}
    \includegraphics[width=0.28\textwidth]{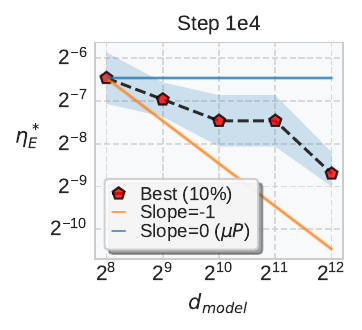}
  \end{center}
  \caption{\small{Optimal Embedding LR vs model width. Transformer model with vocab size 32768, trained on Wikitext2 for 1e4 steps with Adam.}}\label{fig:no_hp_transfer_with_mup}
\end{wrapfigure}
\paragraph{Limitations of $\mu$P for LLMs.} Recent empirical work \citep{blake2025umupunitscaledmaximalupdate} has shown that while the $\Theta_d(d^{-1})$ scaling effectively leads to HP transfer for hidden/output LRs, the optimal embedding LR seems to shift as width $d$ increases, contradicting $mu$P's predictions of a constant optimal embedding LR across scales. In another work \citep{everett2024scalingexponentsparameterizationsoptimizers}, the authors showed that in the case of LLMs different parametrizations can match and even outperform $\mu$P in terms of the quality of HP transfer, including standard parametrization, provided learning rate exponents are corrected to account for width growth. This contradicts $\mu$P's intuition that maximal feature learning is the only way to achieve HP transfer.

Understanding this discrepancy requires examining the fundamental limitations of $\mu$P, especially when applied to LLMs. In this work, we show that two critical factors contribute to this discrepancy:
\begin{itemize}[leftmargin=*]
    \item \underline{Vocabulary size:} The $\mu$P framework assumes fixed input and output dimensions while allowing only width to go to infinity. Consequently, the constants in $\Theta_d(.)$ implicity depend on the input and output dimensions. In LLMs, vocabulary size is typically much larger than width, and larger models generally require larger vocabularies as empirically shown in \citep{tao2024scalinglawsvocabularylarger}.
    \item \underline{Data-specific characteristics}: An important aspect of the transformer architecture is embedding layer, which functions as a lookup table rather than a conventional linear layer. Although $\mu$P predicts $\Theta_d(1)$ scaling for embedding LR, the lookup nature of the embedding layer creates differential update patterns: embedding vectors corresponding to frequent tokens receive significantly more informative updates than those corresponding to less frequent tokens. $\mu$P does not incorporate this aspect in the analysis leading to the stated scaling exponents.
\end{itemize}

Our work addresses both factors through an intuitive analytical framework. We demonstrate how incorporating vocabulary scaling and data-specific characteristics  yield different scaling behavior for optimal embedding LR. Specifically, we establish that Adam-like optimizers that use element-wise normalization lead to an interesting phenomenon: 
\begin{center}
\emph{As vocabulary size increases, the training dynamics interpolate between the $\mu$P regime and a Large Vocabulary (LV) regime where optimal LRs satisfies roughly LR$_{emb}/$LR$_{hidden}=\Theta(d^{-1/2})$.}
\end{center}

Intuitively, the change in the scaling exponents results from the effect of vocabulary size on the alignment between weight updates and incoming activations during training (see next section for an in-depth discussion). 

We trained a decoder-only transformer trained with Adam for $1e4$ steps on Wikitext2 \citep{merity2016wikitext}. We use a large vocab size of 32768 and width $d \in \{256, 512, 1024, 2048, 4096\}$ and sweep over LR$_{emb}$, LR$_{hidden}$, LR$_{out}$, Embedding Init, Hidden Init, and Output Init (See \cref{app:emp} for more details about the experimental setup).  Interestingly, we found that the optimal performance is generally achieved by using the same initialization and learning rate for hidden layers and output layer which contradicts $\mu$P's predictions, and confirms some findings in \cite{everett2024scalingexponentsparameterizationsoptimizers} showing that SP can outperform $\mu$P is some cases.

\cref{fig:no_hp_transfer_with_mup} shows the optimal LR$_{emb}$ as a function of width $d$. The optimal embedding LR seems to decrease with model width at a sublinear rate, contradicting $\mu$P predictions of $\Theta_d(1)$ behavior and confirming the $d^{-1/2}$ scaling previously reported in \citep{blake2025umupunitscaledmaximalupdate}.

\section{Theoretical Analysis}\label{sec:theory}
In this section, we provide an intuitive explanation of the patterns observed in \cref{fig:no_hp_transfer_with_mup}. We conduct an analysis of the infinite-width training dynamics in a controlled setting. We use a linear neural network for simplification, readability, and tractability of the analysis. 

Consider the embedding matrix $
E \in \reals^{m \times d}$, 
where $m$ is the vocabulary size, and $d$ is the width (embedding dim). 
In LLMs, the embedding layer acts as a simple lookup operation. Specifically, given a token with index $i$ in the vocabulary, the forward operation simply returns $E_i$, the $i^{th}$ row in the embedding matrix $E$. Updates to the rows of $E$ are given by
$$
E_i \leftarrow E_i - \eta A(G_i),
$$
where $A(G_i)$ is the processed gradient (e.g. Adam) of the raw gradient given by 
$$
G_i = |B|^{-1} \sum_{x\in B} \nabla_{E_i}\ell(x),
$$
where $B$ is the update batch and $\ell$ is the loss function. For a token $i$, $\nabla_{E_i}\ell(x)$ is null if the sequence $x$ does not contain the token $i$, leading to the gradient for many embedding vectors to be zero. 
When we tokenize a dataset, each token $i \in [m]$ has some probability $\alpha_i$ of appearing in an input sequence. As a result, when the batch size is sufficiently large,\footnote{Here, we just need that the batch size satisfies $|B| \gg T$, which is generally the case for LLM pretraining when batch size is usually of the order of millions, and vocab size is of the order of $10^4$ to $10^5$.} we can rewrite the gradient $G_i$ as 
$$
G_i \approx \sum_{ j=1 }^m \alpha_j \nabla_{E_i}\ell(j).
$$
This is a training setup where the input has a discrete distribution (one-hot vectors). We further refine this observation below.

\paragraph{Simple Model with LLM-like Embedding Layer.} The the embedding layer and the projection layer interact directly via the residual stream. This interaction leads to important changes in the training dynamics as vocabulary size increases. To model this interaction, we consider a simple linear network consisting only of the embedding layer and a projection layer, and is given by
\begin{equation}
\begin{cases}
Y_{emb}(x) = x^\top E \in \reals^{1\times d},\\
f(x)= Y_{emb}(x) W  = x^\top E W,
\end{cases}
\end{equation}
 where $x_i \in \reals^{m}$  denotes the input vector (a one-hot vector), $E \in \reals^{m \times d}$ is the embedding matrix, and $W\in \reals^{d\times m}$ is the projection matrix.\footnote{In this simple model, we do not have an attention mechanism. Therefore each token is processed independently, and there is no direct interaction between tokens in a sequence.} 
 
\paragraph{Large-Batch Training.} As is common in practice, we consider large batch size $N$. We assume that the model is trained by minimizing the square loss $\ell(z,z') = (2m)^{-1}\|z-z'\|^2$ for $z,z' \in \reals^m$.\footnote{The square loss can be replaced with any sufficiently regular loss function.} The dataset $\data$ consists of input-target pairs $(x, z)$. Averaging over the batch, the gradients are given by 
\begin{align*}
    dE =  \left(\frac{1}{N} \sum_{i=1}^N x_i \chi(x_i) \right)  W^\top , \quad dW = E^\top \left(\frac{1}{N} \sum_{i=1}^N x_i\, \chi(x_i) \right),
\end{align*}
where $\chi(x_i) = \nabla_{f(x_i)}\ell(f(x_i), z_i) = m^{-1} (x_i E W - z_i) \in \reals^{1\times m}$.
Let the vectors $(u_i)_{1\leq i \leq m} \in \reals^m$ denote the one-hot vectors in dimension $m$, and scalars $\alpha_i \in [0,1]$ satisfying $\sum_{i=1}^m \alpha_i = 1$ such that the distribution of $x$ follows
$$
\mu(x) = \sum_{i=1}^m \alpha_i \delta(u_i),
$$
where $\delta(u)$ refers to the Dirac mass at $u$. $\alpha_i$ is the probability of seeing token $i$ in the dataset.\footnote{In LLMs, the inputs $x_i$ come from a discrete $m$-dimensional distribution where the support vectors are just the one-hot vectors.} With such discrete input distribution, we can write the gradients as follows
\begin{align*}
    dE =  \left( \sum_{i=1}^m \hat{\alpha}_i u_i \chi(u_i) \right)  W^\top, \quad dW = E^\top \left(\sum_{i=1}^m \hat{\alpha}_i u_i \chi(u_i)\right),
\end{align*}
where $\hat{\alpha}_i = \frac{\#\{ i \in [N]: x_i = u_i\} }{N}$ are the empirical frequencies.\footnote{Here, we use frequency and probability interchangeably to refer to probabilities $\alpha_i$.}

With large batch size ($N\to \infty$), the training dynamics can be approximated by their infinite-batch counterpart
\begin{align*}
    dE &=  \left( \sum_{i=1}^m \alpha_i u_i \chi(u_i) \right)  W^\top = D_\alpha (EW - Z) W^\top\\
    dW &= E^\top \left(\sum_{i=1}^m \alpha_i u_i \chi(u_i)\right) = E^\top D_\alpha (EW - Z),
\end{align*}
where $\alpha_i$ are the \emph{real} frequencies from the data distribution, $D_\alpha= \textrm{Diag}((\alpha)_{1\leq i \leq m})$ is the diagonal matrix consisting of the frequencies $\alpha_i$ in its diagonal, and $Z = (z_1^\top, z_2^\top, \dots, z_m^\top)^\top \in \reals^{m\times m}$ are the concatenated targets. This infinite-batch limit captures the impact of the frequencies $\alpha_i$ on the updates of the embedding vectors. Specifically, for token $i$, we have $dE_i = \alpha_i \chi(u_i) W^\top$, which is proportional to the frequency $\alpha_i$. This is expected as embedding vectors corresponding to less frequent tokens receive less significant updates.

For the remainder of this section, we use the infinite-batch limit approximation to derive scaling limits of feature learning. We analyze training dynamics after taking one step with SignSGD -- a simplified version of Adam -- and show that as vocabulary size increases, the dynamics interpolate between the $\mu$P regime and the Large Vocab (LV) regime.

\subsection{Analysis with Adam-like Optimizers}
In practice, when training a large model, we often use Adam \citep{kingma2017adammethodstochasticoptimization} or its variants. Unlike gradient descent, Adam-like optimizers process the raw gradient to incorporate momentum and normalization. For scaling dynamics, normalization is particularly significant because it fundamentally changes the magnitude of gradients, which becomes $\Theta(1)$ across virtually all scalable dimensions. In order to have tractable analysis with such optimizers, we consider SignSGD, a momentum-less version of Adam that replaces each coordinate in the gradient with its sign element-wise, and we have
\begin{align*}
E_{t+1} &= E_t - \eta_E \, \Sign(D_\alpha (E_t W_t - Z) W_t^\top),\\
W_{t+1} &= W_t - \eta_W \Sign(E_t^\top D_\alpha (E_t W_t - Z))
\end{align*}
where $\Sign(v) = (\textrm{sign}(v_i))_{v_i \in v}$ is of the same size and dimension as $v$.\footnote{Specifically, given a matrix/vector $z = (z_i)_{1\leq i \leq p} $, $\Sign(z) = (\Sign(z_i))_{1\leq i\leq p}$, where $\Sign(z_i) = 1$ if $z_i\geq 0$ and $0$ otherwise.}

After one step, the output for a given token $i \in [m]$ changes as follows
\begin{align*}
E_{i,1} W_1 =  E_{i,0} W_0 &- \underbrace{\eta_W E_{i,0} \, \Sign(E_0^\top D_\alpha (E_0 W_0 - Z))}_{\delta_W^i}
-\underbrace{\eta_E \Sign((E_{i,0} W_0 - Z)W_0^\top)W_0}_{\delta_E^i}\\
& + \underbrace{\eta_E \eta_W \Sign( (E_{i,0} W_0 - z_i) W_0^\top) \Sign(E_0^\top D_\alpha (U^\top E_0 W_0 - Z)}_{\delta_{W,E}^i}.
\end{align*}
The terms $\delta_W^i$ and $\delta_E^i$ represent \emph{feature learning} components corresponding to updates in $W$ and $E$ respectively. Specifically, $\delta_W^i$  corresponds to updating $W$ while keeping $E$ fixed, and vice-versa for $\delta_E^i$. They capture the respective contributions of $W$ and $E$ to feature learning after a single optimization step. The term $\delta_{W,E}^i$ captures the multiplicative effect of both components. $\mu$P-style theory defines efficient feature learning as having both $\delta_W^i = \Theta_d(1)$ and $\delta_E^i=\Theta_d(1)$, element-wise. The intuition is straightforward: we want to avoid instability in training dynamics (implying components should be at most $\mathcal{O}_d(1)$), while ensuring both parameters contribute effectively to feature learning (avoiding cases where, for instance, $\delta_E^i = o_d(1)$ and $\delta_W^i = \Theta_d(1)$). It can be shown that having both $\delta_E^i=\Theta_d(1)$ and $\delta_W^i = \Theta_d(1)$ implies that the multiplicative feature learning component also satisfies $\delta_{W,E}^i=\Theta_d(1)$.

One limitation of $\mu$P is that vocabulary size $m$ is considered fixed and only $d$ goes to infinity. Therefore, it is unclear how $m$ affects feature learning components $\delta_E^i$ and $\delta_W^i$ for different tokens $i$. In the next result, we address this question and characterize the asymptotic behavior of $\delta_W^i$ and $\delta_E^i$ when both width $d$ and vocabulary size $m$ are large.

\begin{thm}[Informal]\label{thm:signsgd_asymptotics}
Consider the following setup and notation:
\begin{itemize}
    \item Initialization: $W \sim \normal(0,\sigma_W^2)$ and  $ E\sim \normal(0, \sigma_E^2)$ (iid).
    \item For $i\in [m]$, the quantities $\bar{\delta}_E^i = \left(m^{-1}\E \|\delta_E^i\|^2\right)^{1/2}$ and $\bar{\delta}_W^i = \left(m^{-1}\E \|\delta_W^i\|^2\right)^{1/2}$ denote the average norm of $\delta_E^i$ and $\delta_W^i$ respectively,\footnote{The average norm $\bar{\delta}_E^i = \left(m^{-1}\E \|\delta_E^i\|^2\right)^{1/2}$ is a good indicator of the magnitude of the coordinates of $\delta_E^i$ since these coordinates have the same distribution.} where the expectation is w.r.t initialization weights.
    \item  \(\displaystyle
  \bar\alpha^{2}\;:=\;\frac1m\sum_{k=1}^{m}\alpha_k^{2}\) denotes the average squared-frequencies.
\end{itemize}
Then, for all $i\in [m]$, the following holds
\begin{itemize}
    \item Embedding: $\bar{\delta}_E^i = \Theta_{m,d}\left(\eta_E \sigma_W \sqrt{d + \frac{2d(d-1)}{\pi m})}\right)$.
    \item Projection: $\bar{\delta}_W^i = \Theta_{m,d}\left(\eta_W \sigma_E \sqrt{d + \frac{\alpha_i^2}{\bar\alpha^2}\frac{2d(d-1)}{\pi m})}\right)$.
\end{itemize}
\end{thm}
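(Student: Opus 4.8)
The plan is to compute the second moments $\E\|\delta_E^i\|^2$ and $\E\|\delta_W^i\|^2$ directly, exploiting the fact that at initialization $E$ and $W$ are independent Gaussian matrices and that the sign function makes the relevant matrices depend on $E,W$ only through signs of linear/bilinear forms. First I would fix the structure: write $\delta_E^i = \eta_E\, \Sign\!\big((E_{i,0}W_0 - z_i)W_0^\top\big)\,W_0$, so that its $j$-th coordinate is $\eta_E \sum_{k=1}^d S_k (W_0)_{kj}$ where $S_k = \mathrm{sign}\big(\big((E_{i,0}W_0 - z_i)W_0^\top\big)_k\big)$. Squaring and taking expectation gives $\E\|\delta_E^i\|^2 = \eta_E^2 \sum_{j,k,k'} \E[S_k S_{k'} (W_0)_{kj}(W_0)_{k'j}]$, which splits into a diagonal part ($k=k'$) contributing $\eta_E^2\, d\,\sigma_W^2\, \cdot m \cdot (\text{something }\Theta(1))$ — actually $\sum_j \E[(W_0)_{kj}^2] = m\sigma_W^2$ summed over $d$ values of $k$, giving the $d$ term — and an off-diagonal part ($k\neq k'$) where the subtle correlation between the sign pattern $S_k$ and the entries $(W_0)_{kj}$ must be tracked. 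The off-diagonal part is where the $\frac{2d(d-1)}{\pi m}$ factor originates: heuristically, $((E_{i,0}W_0 - z_i)W_0^\top)_k = \sum_{j'} (E_{i,0}W_0 - z_i)_{j'} (W_0)_{kj'}$, and since $E_{i,0}W_0 - z_i$ has $m$ coordinates each of order $\Theta(\sigma_E\sigma_W\sqrt{d})$ (plus the $z_i$ shift), the sign $S_k$ aligns with $(W_0)_{k,\cdot}$ with a bias of order $1/\sqrt{m}$; the $2/\pi$ is the standard Gaussian-orthant constant $\E[\mathrm{sign}(G)G] = \sqrt{2/\pi}$ appearing squared.

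The key steps, in order, are: (1) reduce each average norm to a sum of pairwise sign–weight correlations as above; (2) isolate the diagonal contribution, which is exactly $\eta_E^2\sigma_W^2 d$ (resp.\ $\eta_W^2\sigma_E^2 d$) up to the $\Theta$ constant, and note it matches the "$\mu$P" term; (3) for the off-diagonal terms, condition on $W_0$ (resp.\ on $E_0$ for the $W$-update) and use a Gaussian correlation / Price-type identity to express $\E[S_k S_{k'}]$ and the cross terms $\E[S_k (W_0)_{k'j}]$ in terms of the normalized inner products of the rows of $W_0$; (4) show these inner products concentrate, so that the off-diagonal sum over the $d(d-1)$ pairs contributes a factor $\frac{2}{\pi}\cdot\frac{1}{m}$ times $d(d-1)$ — this is where the $\bar\alpha^2$ normalization enters for $\delta_W^i$: the matrix $E_0^\top D_\alpha(E_0W_0 - Z)$ carries the frequencies, and the effective "signal" seen in direction $i$ is reweighted by $\alpha_i^2/\bar\alpha^2$ because $D_\alpha$ scales the $i$-th row by $\alpha_i$ while the aggregate scale is set by $\sum_k \alpha_k^2 = m\bar\alpha^2$; (5) combine diagonal and off-diagonal and take square roots.

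The main obstacle will be step (3)–(4): the sign vector $S$ is not independent of the Gaussian weights it is being correlated against (it is a deterministic function of them), so the naive "sign is $\pm1$ uniform" heuristic is wrong and one must carefully linearize. The cleanest route is to write $(E_{i,0}W_0 - z_i)W_0^\top$ as a quadratic form in $W_0$ plus the $z_i W_0^\top$ term, peel off the rank-one "self" contribution of row $k$ from the sign argument, and argue that the remaining part is asymptotically Gaussian and nearly independent of $(W_0)_{k,\cdot}$, so that $\E[S_k (W_0)_{kj} \mid \text{rest}]$ is governed by a one-dimensional Gaussian computation yielding the $\sqrt{2/\pi}$ constant and the $1/\sqrt m$ bias. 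Making "nearly independent" and "asymptotically Gaussian" precise in the joint $m,d\to\infty$ regime — and checking that the error terms are genuinely lower order than both $d$ and $\frac{d^2}{m}$ across the whole range of relative rates of $m$ and $d$ — is the technical heart; everything else is bookkeeping of Gaussian moments. Since the theorem is stated as "Informal," I would present the argument at the level of leading-order asymptotics, deferring the concentration estimates to the appendix.
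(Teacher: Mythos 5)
Your derivation captures the right constants and the right structural decomposition (diagonal term giving the $d$ contribution via $\E[(W_0)_{kj}^2]$, off-diagonal sign--weight cross-correlations giving the $\tfrac{2}{\pi}\tfrac{d(d-1)}{m}$ contribution via a Stein-type identity, and the $\alpha_i^2/\bar\alpha^2$ reweighting for $\delta_W^i$ coming from $D_\alpha$ scaling the rows while $\sum_k \alpha_k^2 = m\bar\alpha^2$ sets the aggregate scale). That all agrees with the paper's computation in its technical results (Theorems~\ref{thm:indep} and~\ref{thm:sign_hetero}), which also proceed by a Stein lemma for $\E[\operatorname{sign}(Z)G]=\sqrt{2/\pi}\,\rho$, conditioning, and the law of total variance, then concluding that $\operatorname{Cov}(X)=\bigl[d+\tfrac{2}{\pi m}d(d-1)\bigr]I_m$ (resp.\ with the extra $\alpha_i^2/\bar\alpha^2$ factor).

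Where you diverge from the paper is in what you take as given. You correctly flag that the sign vector $\Sign\bigl((E_{i,0}W_0 - z_i)W_0^\top\bigr)$ is a deterministic function of the same Gaussian matrix $W_0$ that it is being correlated against, and you propose to handle this by a rank-one peeling plus a concentration/asymptotic-Gaussianity argument. That is a genuinely harder program, and you leave its technical heart open (``making `nearly independent' and `asymptotically Gaussian' precise''). The paper does not attempt this at all. Instead, the formal statement introduces Assumption~\ref{assumption:random_signal_at_initialization}, which posits that the residual $EW - Z$ has iid $\normal(0,1)$ entries \emph{and is treated as independent of $W$ (resp.\ $E$) in the technical lemmas} --- the vector $v$ in Theorem~\ref{thm:indep} and the matrix $M$ in Theorem~\ref{thm:sign_hetero} are explicitly assumed independent of the Gaussian weights they multiply. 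Under that modeling assumption the computation becomes exact and there is no dependence issue to linearize away; this is precisely why the theorem is labeled ``Informal'' in the main text. So: your route would, if completed, give a stronger, assumption-free version of the result, but as a reconstruction of the paper's proof it misses the key simplifying device (the independence assumption on the residual) and consequently leaves a real gap at exactly the step the paper resolves by fiat rather than by analysis.

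One smaller remark: you parenthetically write that the coordinates of $E_{i,0}W_0 - z_i$ are ``each of order $\Theta(\sigma_E\sigma_W\sqrt d)$.'' Under the paper's Assumption~\ref{assumption:random_signal_at_initialization} this scale is normalized away (the residual is taken to be standard Gaussian), which is why $\sigma_E$ does not appear in the $\bar\delta_E^i$ asymptotic and $\sigma_W$ does not appear in $\bar\delta_W^i$. If you work without that assumption, you must check that the scale of the residual genuinely cancels in the sign and only re-enters through the explicit $\sigma_W$ (resp.\ $\sigma_E$) factor multiplying the row you sum against --- a point worth making explicit in your bookkeeping.
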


\cref{thm:signsgd_asymptotics} characterizes how feature learning components corresponding $\delta_E$ and $\delta_W$ as both with $d$ and vocabulary size $m$ grow. The formal statement and proof are provided in \cref{sec:proofs}.  Note that token frequencies $\alpha_i$ naturally depend on $m$. 

If we fix $m$, as in $\mu$P, we obtain exactly the $\mu$P scaling rules. Specifically, with $m$ fixed, the asymptotics in \cref{thm:signsgd_asymptotics} become $\bar{\delta}_E^i = \Theta_d(\eta_E \sigma_W d)$ and $\bar{\delta}_W^i = \Theta_d(\eta_W \sigma_E d)$. With $\sigma_W = \Theta_d(d^{-1/2})$, $\sigma_E = \Theta_d(1)$, $\eta_W = \Theta_d(d^{-1})$, and $\eta_E = \Theta_d(1)$ ($\mu$P), these asymptotics satisfy $\bar{\delta}_W^i = \Theta_d(1)$ and $\bar{\delta}_E^i= \Theta_d(1)$. 

However, in practice $m$ is typically large and often significantly larger than $d$. Therefore, considering a fixed $m$ is an unrealistic assumption. If we let vocabulary size $m$ grow, we obtain different scaling behaviors. One complication is the dependence of the frequencies $\alpha_i$ on vocabulary size $m$. In order to control how these frequencies change with $m$, we assume that token distribution follows the Zipf-Mandelbrot law. 

\begin{wrapfigure}{r}{0.28\textwidth}
  \begin{center}
    \includegraphics[width=0.28\textwidth]{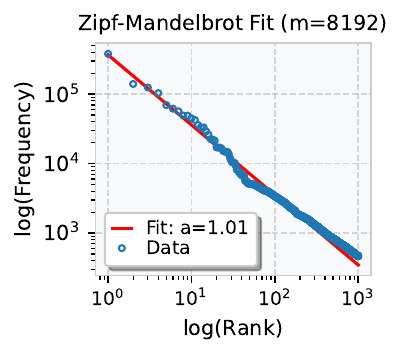}\\
    \vspace{-0.2cm}
    \includegraphics[width=0.28\textwidth]{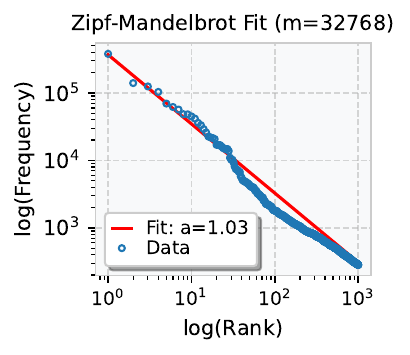}
  \end{center}
  \caption{\small{Verification of the Zipf-Mandelbrot Law assumption for $m \in \{8192, 32768\}$. In both cases, we train a BPE tokenizer on Wikitext2, tokenize the dataset, and show the frequencies. }}\label{fig:zipf_law}
  \vspace{-3em}
\end{wrapfigure}

\begin{assumption}[Zipf-Mandelbrot Law \citep{Zipf1932}]\label{assump:zipflaw}
Without loss of generality assume that tokens are ranked by their frequencies. For vocab size $m$, the frequencies are given by $\alpha_i \propto i^{-a_m}$ for some $a_m>0$.
\end{assumption}
The Zipf-Mandelbrot Law is a power law frequently observed in ranked data. It is well-studied in probability and statistics, and the coefficient $a_m$ is usually close to $1$. \cref{fig:zipf_law} verifies this assumption by showing token frequencies for $m \in \{8192, 32768\}$, where in both cases we trained  a BPE tokenizer on Wikitext2 \citep{merity2016wikitext} and tokenized the same dataset to obtain frequencies. The results demonstrate that this power law is an excellent approximation of token distribution. Under this assumption, the next result shows that the quantity $\bar{\alpha}^2$ has $\Theta(m)$ asymptotic growth with vocabulary size $m$ which leads to scaling rules that are different from $\mu$P.
\begin{lemma}[Average Frequency vanishes with Vocabulary Size]\label{lemma:alpha_asymptotics}
Under \cref{assump:zipflaw}, assume that there exist constants $\munderbar{a}, \bar{a} > 1/2$ such that for all $m \geq 1, a_m \in [\munderbar{a}, \bar{a}]$. Then, we have 
$$
\bar\alpha = m^{-1} \sum_{i=1}^m i^{-2a_m} = \Theta_m(m^{-1}).
$$
\end{lemma}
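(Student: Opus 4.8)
The plan is to reduce the statement to a clean estimate on the partial sums of the $p$-series $\sum_{i=1}^m i^{-2a_m}$, uniformly over $a_m$ in the compact interval $[\munderbar a,\bar a]\subset(1/2,\infty)$, and then divide by $m$. Write $S_m(a) := \sum_{i=1}^m i^{-2a}$ so that $\bar\alpha = m^{-1}S_m(a_m)$; the goal is to show $S_m(a_m) = \Theta_m(1)$, i.e.\ that $S_m(a_m)$ is bounded above and below by positive constants that do not depend on $m$. Note first that $S_m(a)\ge S_1(a)=1$ for every $a>0$, which already gives the lower bound $\bar\alpha \ge m^{-1}$, hence $\bar\alpha = \Omega_m(m^{-1})$, with no use of the assumption $a_m>1/2$. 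So the only real work is the matching upper bound $S_m(a_m) = \bigO_m(1)$.

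For the upper bound I would use the standard integral comparison: since $x\mapsto x^{-2a}$ is decreasing on $[1,\infty)$ for $a>0$, we have $\sum_{i=2}^{m} i^{-2a} \le \int_1^{m} x^{-2a}\,dx = \frac{1 - m^{1-2a}}{2a-1}$ whenever $2a\neq 1$, so that $S_m(a) \le 1 + \frac{1}{2a-1}$ for all $a>1/2$ and all $m$. Now invoke uniformity: because $a_m \ge \munderbar a > 1/2$ for every $m$, we get $S_m(a_m) \le 1 + \frac{1}{2\munderbar a - 1} =: C$, a constant independent of $m$. Combining with the lower bound $S_m(a_m)\ge 1$ yields $1 \le S_m(a_m) \le C$, and dividing through by $m$ gives $m^{-1} \le \bar\alpha \le C\,m^{-1}$, which is exactly $\bar\alpha = \Theta_m(m^{-1})$. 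The upper bound $\bar a$ is not actually needed for this argument — boundedness below away from $1/2$ is what matters — but stating it does no harm and matches the hypothesis as written; I would remark on this.

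There is no serious obstacle here; the one point that needs a little care, and which I would flag as the ``main'' subtlety, is the uniformity in $a_m$: the bound $\frac{1}{2a-1}$ blows up as $a\downarrow 1/2$, so one genuinely uses the hypothesis $\munderbar a>1/2$ (a sequence with $a_m\downarrow 1/2$ fast enough could make $S_m(a_m)\to\infty$, e.g.\ $a_m = \tfrac12 + \tfrac{1}{\log m}$ gives $S_m(a_m)\asymp \log m$, breaking $\Theta_m(m^{-1})$). I would also double-check the edge case $a=1/2$ is excluded by hypothesis (there $S_m\asymp \log m$), so it never arises. Finally, I'd note that the statement in the lemma writes $\bar\alpha = m^{-1}\sum i^{-2a_m}$ but the running definition had $\bar\alpha^2 = m^{-1}\sum\alpha_k^2$ with $\alpha_k \propto k^{-a_m}$; after normalizing $\alpha_k = k^{-a_m}/S_m(a_m)$ one gets $\sum_k\alpha_k^2 = S_m(2a_m)/S_m(a_m)^2$, and since both $S_m(a_m)$ and $S_m(2a_m)$ are $\Theta_m(1)$ by the argument above (using $a_m>1/2$ for the former — indeed $2a_m > 1$ so the latter is even easier), we again get $\bar\alpha^2 = \Theta_m(m^{-1})\cdot m/m = \Theta_m(m^{-1})$; the displayed formula in the lemma is the un-normalized proxy, and I would reconcile the two normalizations explicitly at the start of the proof so the constant bookkeeping is transparent.
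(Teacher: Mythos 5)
Your argument is correct and is essentially the same as the paper's: the paper also reduces to the observation that $\sum_{i=1}^m i^{-2a_m}$ is sandwiched between two convergent $p$-series partial sums ($\sum i^{-2\bar a}$ and $\sum i^{-2\munderbar a}$, both $\Theta_m(1)$), though it states this in a single line without spelling out the integral comparison or the explicit constant $C = 1 + (2\munderbar a - 1)^{-1}$ as you do. Two small remarks worth keeping: your observation that the lower bound $S_m(a)\ge 1$ is unconditional (so the hypothesis $a_m\le\bar a$ is never actually used, and even the paper's citation of $\sum i^{-2\bar a}=\Theta_m(1)$ is redundant) is accurate; and your reconciliation of the lemma's un-normalized display $\bar\alpha = m^{-1}\sum i^{-2a_m}$ with the running definition $\bar\alpha^2 = m^{-1}\sum_k\alpha_k^2$ (where $\alpha_k = k^{-a_m}/S_m(a_m)$) correctly identifies a notational slip in the paper — the displayed quantity should be $\bar\alpha^2$ up to the $\Theta_m(1)$ normalization factor $S_m(a_m)^{-2}$, which does not change the asymptotics — a detail the paper's one-line proof glosses over.
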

\begin{proof}
The proof follows from noting that $\sum_{i=1}^m i^{-2 \munderbar{a}} = \Theta_m(1)$ and $\sum_{i=1}^m i^{-2 \bar{a}} = \Theta_m(1)$ (standard series convergence result).
\end{proof}
\cref{lemma:alpha_asymptotics} shows that under the power law assumption on token frequencies (with realistic values for $a$), the average squared frequency vanishes as $1/m$ with vocabulary size $m$. This is expected because with such power laws, most of the distribution weight is concentrated among the frequent tokens.

Using the result of \cref{lemma:alpha_asymptotics}, we show in the theorem that the asymptotic behavior of $\bar{\delta}_W^i$ and $\bar{\delta}_E^i$ differs from the $\mu$P asymptotics when both $d$ and $m$ are large, leading to different scaling rules.
\begin{thm}[Large Vocabulary Regime, (Informal)]\label{thm:large_vocab_regime}
Let $k$ be a fixed integer, independent of $m$. Suppose that \cref{assump:zipflaw} is satisfied, and assume there exist constants $\munderbar{a}, \bar{a} > 1/2$ such that for all $m \geq 1, a_m \in [\munderbar{a}, \bar{a}]$. Then, using the same notation of \cref{thm:signsgd_asymptotics}, for any $i \leq k$
\begin{itemize}
    \item Embedding: $\bar{\delta}_E^i = \Theta_{m,d}\left(\eta_E \sigma_W \sqrt{d + \frac{2d(d-1)}{\pi m})}\right).$
    \item Projection: $\bar{\delta}_W^i = \Theta_{m,d}\left(\eta_W \sigma_E \sqrt{d + \frac{2}{\pi }d(d-1))}\right).$
\end{itemize}
\end{thm}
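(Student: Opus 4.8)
The plan is to deduce \cref{thm:large_vocab_regime} from \cref{thm:signsgd_asymptotics} and \cref{lemma:alpha_asymptotics}; the only genuinely new step is to pin down the frequency ratio $\alpha_i^2/\bar\alpha^2$ for the low-rank tokens $i\le k$. The embedding claim needs no work: the asymptotic for $\bar\delta_E^i$ in \cref{thm:signsgd_asymptotics} carries no dependence on the token frequencies, so it transfers verbatim (with the same $\Theta_{m,d}$ constants) and already has the form stated here. Everything therefore reduces to the projection term, where \cref{thm:signsgd_asymptotics} gives $\bar\delta_W^i=\Theta_{m,d}\bigl(\eta_W\sigma_E\sqrt{d+\tfrac{\alpha_i^2}{\bar\alpha^2}\,\tfrac{2d(d-1)}{\pi m}}\bigr)$, and the goal is to show that for each fixed $i\le k$ one has $\tfrac1m\cdot\tfrac{\alpha_i^2}{\bar\alpha^2}=\Theta_m(1)$, i.e. $\alpha_i^2/\bar\alpha^2=\Theta_m(m)$.

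To get this I would first note that $\alpha_i^2/\bar\alpha^2$ is invariant under a common rescaling of all the $\alpha_k$, so I may use the unnormalized Zipf weights $\alpha_k=k^{-a_m}$ of \cref{assump:zipflaw} and write $\alpha_i^2/\bar\alpha^2=\frac{m\, i^{-2a_m}}{\sum_{k=1}^{m}k^{-2a_m}}$. The denominator is $\Theta_m(1)$ uniformly in $a_m\in[\munderbar a,\bar a]$ — this is exactly the series estimate from \cref{lemma:alpha_asymptotics}: it is $\ge 1$ (first term) and $\le\sum_{k\ge1}k^{-2\munderbar a}<\infty$ because $2\munderbar a>1$. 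The numerator factor is $\Theta_m(1)$ as well, since $k^{-2\bar a}\le i^{-2a_m}\le 1$ for $1\le i\le k$; the lower bound here is precisely where $k$ being fixed (independent of $m$) is essential — if $k$ grew with $m$, the bound $k^{-2\bar a}$ would vanish and the argument would collapse. Multiplying, $\alpha_i^2/\bar\alpha^2=\Theta_m(m)$ with constants depending only on $k,\munderbar a,\bar a$, hence uniform over $i\le k$.

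Feeding this back in, $\tfrac{\alpha_i^2}{\bar\alpha^2}\,\tfrac{2d(d-1)}{\pi m}$ is bounded above and below by positive constant multiples of $d(d-1)$, so $d+\tfrac{\alpha_i^2}{\bar\alpha^2}\tfrac{2d(d-1)}{\pi m}=\Theta_{m,d}\!\left(d+d(d-1)\right)$; the expression $d+\tfrac2\pi d(d-1)$ written in the statement is just a convenient representative of this class (obtained by formally substituting $\alpha_i^2/\bar\alpha^2=m$, and in any case $\sqrt{d+c\,d(d-1)}=\Theta_{m,d}(d)$ for every constant $c>0$, so the precise constant is cosmetic). Composing with the outer $\Theta_{m,d}$ of \cref{thm:signsgd_asymptotics} gives the claimed asymptotic for $\bar\delta_W^i$. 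I do not expect a real obstacle here; the one point to handle with care is uniformity — one must check that the constants produced by \cref{thm:signsgd_asymptotics} come from the Gaussian second-moment computations rather than from the frequencies (so they do not degrade with $i$ or $m$), and that the frequency bounds above hold with a single pair of constants for all $i\le k$, which they do by the fixed-$k$ hypothesis.
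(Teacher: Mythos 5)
Your proposal is correct and takes essentially the same route as the paper: the paper simply states that \cref{thm:large_vocab_regime} is a corollary of \cref{thm:signsgd_asymptotics} and \cref{lemma:alpha_asymptotics} and calls the deduction straightforward, while you spell out exactly that deduction — showing $\alpha_i^2/\bar\alpha^2 = \Theta_m(m)$ for fixed $i\le k$ via the scale-invariance of the ratio and the $\Theta_m(1)$ bound on $\sum_{j=1}^m j^{-2a_m}$ — with the correct attention to where the fixed-$k$ hypothesis is used.
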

One of the assumptions in \cref{thm:large_vocab_regime} is that $k$ is a fixed integer and the results hold for $i \leq k$. Intuitively, this should be interpreted as the result holds for high frequency tokens. For these tokens, the $d$ term in the asymptotic formula of $\bar{\delta}_W^i$ is amplified by the $\Theta(m)$ from \cref{lemma:alpha_asymptotics}, which brings back the asymptotics of $\bar{\delta}_W^i$ to the $\mu$P regime where $\bar{\delta}_W^i = \Theta(\eta_W \sigma_E d)$, while preserving the $1/m$ term in the asymptotics of $\bar{\delta}_E^i$. Consequently, we should expect the impact of vocabulary size to be more pronounced on the embedding layer learning rate. This aligns with empirical observations, where $\mu$P scaling rule for the embedding layer does not achieve HP transfer as shown in \cref{fig:no_hp_transfer_with_mup}. Details about the formal statement of \cref{thm:large_vocab_regime} and its proof are provided in \cref{sec:proofs}.

\paragraph{Interpolation between regimes.} As $m$ grows, the asymptotics interpolate between the $\mu$P regime where $d$ term dominates in both $\bar{\delta}_E^i$ and $\bar{\delta}_W^i$, and the Large Vocabulary (LV) regime where $\sqrt{d}$ term dominates in $\bar{\delta}_E^i$ and $d$ term dominates in $\bar{\delta}_W^i$. In the latter, $\bar{\delta}_E^i$ becomes asymptotically $\Theta(\eta_E \sigma_W \sqrt{d})$, which differs from $\mu$P regime. In the LV regime, $\mu$P is no longer optimal. Empirically, we found that the best initialization in the LV regime is $\sigma_W = \sigma_E = d^{-1/2}$ which is the initialization used in standard parametrization (SP), which differs from $\mu$P's $\sigma_W=d^{-1}$ and $\sigma_E = 1$. This leads to $\eta_E/\eta_W = \Theta(\frac{\sigma_E}{\sigma_W} \sqrt{d}) = \Theta(\sqrt{d})$ unlike the $\Theta(d)$ ratio predicted by $\mu$P. Note that SP initialization was also found to lead to HP transfer in \cite{everett2024scalingexponentsparameterizationsoptimizers}, provided that the learning rates exponents are adapted accordingly.

\paragraph{What about low frequency tokens?} The result of \cref{thm:large_vocab_regime} holds for the most frequent tokens $i$. If we consider less frequent tokens $i$, the asymptotic behavior of $\bar{\delta}_W^i$ changes significantly. For instance, if we consider a token $i_m$ such that $\alpha_{i_m}^2 = \Theta(m^{-\beta})$, then we have $\bar{\delta}_W^{i_m} = \Theta_{m,d}\left(\eta_W \sigma_E \sqrt{d + m^{-\beta + 1} \times \frac{2 d(d-1))}{\pi m}}\right).$ If $\beta>1$, then in this case the $d^2$ term is actually amplified in contrast to the behavior of the most frequent tokens. However, it should be expected that the most frequent tokens are more important during training and therefore optimal parametrizations are generally tuned so that feature learning for such tokens is optimal.

An important conclusion from this analysis is that $\mu$P is suboptimal for LLM training, especially in how embedding learning rate is parametrized. Large vocabulary size $m$ reduces the signal strength in $\delta_E^i$ from $d$ to $\sqrt{d}$, leading to changes in optimal scaling rules.  In the next section, we show that using a ratio of $\eta_E/\eta_W = \sqrt{d}$ yields better performance when the vocabulary size is large ($m \gg d$), a condition generally satisfied in practice.

\paragraph{Extension to Transformer Architectures.} The theoretical analysis presented in this work uses a simplified model consisting only of embedding and projection layers. Modern LLMs have hidden layers between embedding and projection layers. Despite this complexity, we expect our core findings to remain relevant to full transformer architectures due to a critical architectural feature: the residual stream, which creates a direct connection between the embedding and projection layers. While a comprehensive theoretical analysis of the full transformer architecture is not possible using the mathematical framework introduced in this paper, the fundamental dynamics we have identified likely hold. The direct pathway created by the residual connections preserves the essential relationship between the embedding and projection layers that drives the scaling behaviors we have observed.

\section{Experiments}\label{sec:exps}
Form the theoretical analysis in \cref{sec:theory}, we found that in the LV regime, feature learning dynamics are different from those associated with $\mu$P, and concluded that when the vocabulary size $m$ is large enough (i.e. $m \gg d$), a good rule of thumb is to set $\eta_E$ and $\eta_W$ such that $\eta_E/\eta_W =\sqrt{d}$. We call this the \emph{$\sqrt{d}$-rule}. In the more realistic scenario of transformers architecture, $\eta_W$ is the learning rate used for hidden layers and projection layer. More generally, we define the \emph{Large Vocabulary Parametrization (LVP)} which uses the same initialization as SP, the same learning rate of $\mu$P for hidden and output projection, but incorporates the $\sqrt{d}$-rule in the embedding learning rate. \cref{tab:params} summarizes these different parametrizations.

\begin{table}[ht]
    \centering
    \caption{Different parametrization: SP (Standard Parametrization), $\mu$P (Maximal Update Parametrization), and LVP (Large Vocabulary Parametrization). We show how each parametrization sets the initialization variance and learning rate (Adam).}
    \begin{tabular}{c|c|c|c|c}
    \Xhline{2\arrayrulewidth}
       Param & HP & Embedding  &  Output & Hidden \\\hline
      SP & Init. Var. & $1$ & $d^{-1}$ & $d^{-1}$ \\
       \cline{2-5}
       & LR (Adam)  & $\eta$ & $\eta$ & $\eta$\\\hline
      $\mu$P & Init. Var. & $1$ & $d^{-2}$ & $d^{-1}$ \\
       \cline{2-5}
       & LR (Adam)  & $\eta$ & $\eta d^{-1}$ & $\eta d^{-1}$\\\hline
      LVP (ours) & Init. Var. & $d^{-1}$ & $d^{-1}$ & $d^{-1}$ \\
       \cline{2-5}
       & LR (Adam)  & $\eta d^{-1/2}$ & $\eta d^{-1}$ & $\eta d^{-1}$
       \\\Xhline{2\arrayrulewidth}
    \end{tabular}
    \label{tab:params}
\end{table}

Our goal in this section is to assess the effectiveness of this rule for both hyperparameter transfer and effective feature learning. We conducted two experiments:

\begin{enumerate}
    \item We pretrained a series of small language models consisting of only two hidden layers in addition to embedding and projection layers, where we scale both width $d$ and vocabulary size $m$. We found that the $\sqrt{d}$-rule provides near-optimal performance in this case, leading to better transfer of the embedding learning rate.

    \item  To evaluate the benefit of the $\sqrt{d}$-rule in a production-scale scenario, we pretrained a 1B model from scratch and measure perplexity for different ratios $\eta_E/\eta_W$. Our results show that setting $\eta_E/\eta_W =  \sqrt{d}$ yields improved training loss and perplexity.
\end{enumerate}

\begin{wrapfigure}{r}{0.28\textwidth}
\vspace{-1em}
  \begin{center}
    \includegraphics[width=0.28\textwidth]{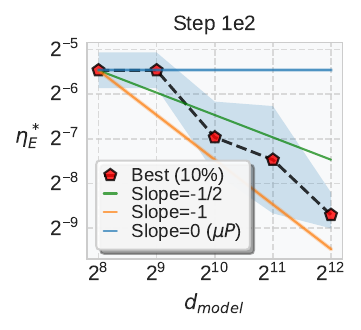}\\
    \vspace{-0.2cm}
    \includegraphics[width=0.28\textwidth]{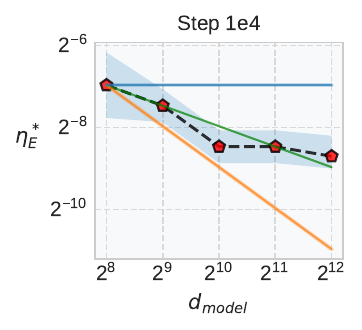}
  \end{center}
  \caption{\small{Optimal $\eta_E$ vs $d_{model}$. The shaded curves is calculated as the geometric mean of all learning rates that give a train loss within $20\%$ of the best train loss. The average is calculated across learning rate grid and random seeds.}}\label{fig:emb_lr_vocab_scaling}
  \vspace{-2em}
\end{wrapfigure}
\subsection{Increasing Vocabulary Size}
% \begin{figure}[t]
%     \centering
%     \includegraphics[width=0.3\linewidth]{figures/optimal_emb_lr_step1e2_vocab_scaling.pdf}
%     \includegraphics[width=0.3\linewidth]{figures/optimal_emb_lr_step1e4_vocab_scaling.pdf}
%     \caption{Caption}
%     \label{fig:emb_lr_vocab_scaling}
% \end{figure}
We trained a small decoder-only transformer with the following configuration: an embedding layer, two hidden layers, and a projection layer. We used Wikitext2 as our training dataset, with a maximum sequence length of $256$ and fixed positional encoding added to the embedding layer. Experiments were conducted for various configurations $(d,m) \in \{(2^k, 2^{k+3}), k=8, \dots, 12\}$. For each vocabulary size $m$, we trained a dedicated  BPE tokenizer on Wikitext2. In the experiments previously reported in \cref{fig:no_hp_transfer_with_mup}, we observed that $\mu$P scaling for hidden and projection layers LRs remains robust as we scale $d$. Therefore, LVP (\cref{tab:params}) uses the same scaling rules as $\mu$P for hidden and output layers. The learning rate for these layers is $\eta_W = \eta d^{-1}$ where $\eta \approx 0.2$ is obtained by grid tuning for the model with $d=2^8$. Similarly, we found that setting $\sigma_W = \sigma_E = d^{-1/2}$ generally gives better performance than $\mu$P's init, so we use that in LVP. We do a grid search over $\eta_E$ and run three seeds per experiment. Each run consists of $1e^4$ Adam steps. See \cref{app:emp} for more details about experimental setup.

Our choice of $(d,m)$ configurations ensures that the vocabulary size scales linearly with $d$. This approach makes the term $d(d-1)/m$ in \cref{thm:large_vocab_regime} comparable to $d$, the first term in that asymptotic formula.  

\cref{fig:emb_lr_vocab_scaling} summarizes our experimental findings. The results confirm that optimal embedding LR decreases with $d$ in contrast to the $\Theta_d(1)$ predictions of $\mu$P. Setting $\eta_E = \Theta(d^{-1/2})$ (or equivalently, $\eta_E/\eta_W = \Theta(d^{1/2})$) yields near-optimal training loss ($t=1e4$).
However, while the $\sqrt{d}$-rule is generally better than $\mu$P or SP, the optimal embedding learning rate exhibits significant variance around the $\sqrt{d}$ line. This might indicate more fundamental limitations in the parametrization. See \cref{sec:discussion} for more details.

\subsection{Production-Scale LLM Pre-training}

We conduct experiments on LLM pre-training using a 1B parameter dense Transformer model. This is a 24 layers model with width $d=2048$.

\smallsection{Training Corpus} To better approximate production-level LLM pre-training, we trained our model on 1.75T tokens as a Causal Language Model. This training dataset is identical to the one used to train Phi-3 dense models~\citep{abdin2024phi}.

\smallsection{Model Architecture} We implemented a modern Transformer architecture with RoPE embeddings, sliding window multi-query attention, and SwiGLU activation. The detailed model configuration is available in Table~\ref{tab:model_params}.

\smallsection{Pre-training Performance} We trained two models for 1.75T tokens. The baseline model used the conventional LLM training configuration, which applies the same learning rate to both the embedding layer and the remaining parameters. The experimental model implemented our proposed scaling rule, which sets the embedding layer learning rate to be $\sqrt{d}$ times larger than for the remaining parameters.

Figure~\ref{fig:training_ppl} visualizes the training perplexity curves for both models, showing that our proposed method leads to consistently faster convergence throughout training and achieves a better final training loss. To further verify the effectiveness of our proposed scaling rule, we experimented with additional learning rate ratios on the 1B model. As shown in Figure~\ref{fig:training_ppl}, \emph{our proposed $\sqrt{d}$ scaling rule leads to near-optimal performance, while further increasing the learning rate yields only marginal benefits}.

\smallsection{Evaluation on Wikitext} We measured the word-level perplexity on the Wikitext test corpus~\cite{merity2016wikitext}. As visualized in Figure~\ref{fig:test_ppl}, our proposed model consistently outperforms the conventional baseline.

\begin{figure}
\begin{minipage}{\textwidth}
\begin{minipage}[b]{0.61\textwidth}
\centering
\includegraphics[width=0.95\linewidth]{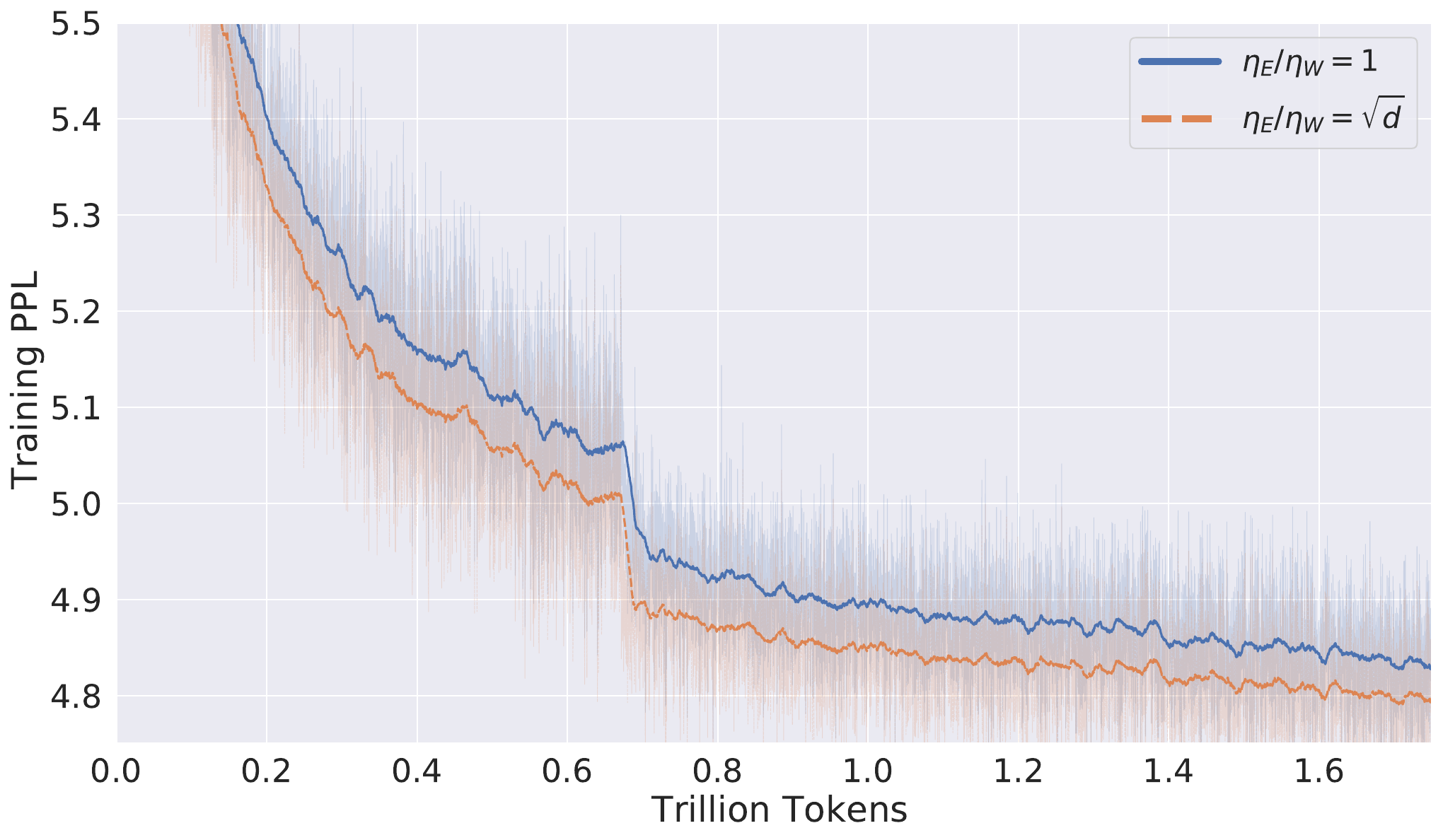}
\subcaption{slope=0 and slope=-1/2 on 1.75T tokens.}
\label{fig:training_ppl}
\end{minipage}
\hfill\,
\begin{minipage}[b]{0.37\textwidth}
\centering
\includegraphics[width=0.95\linewidth]{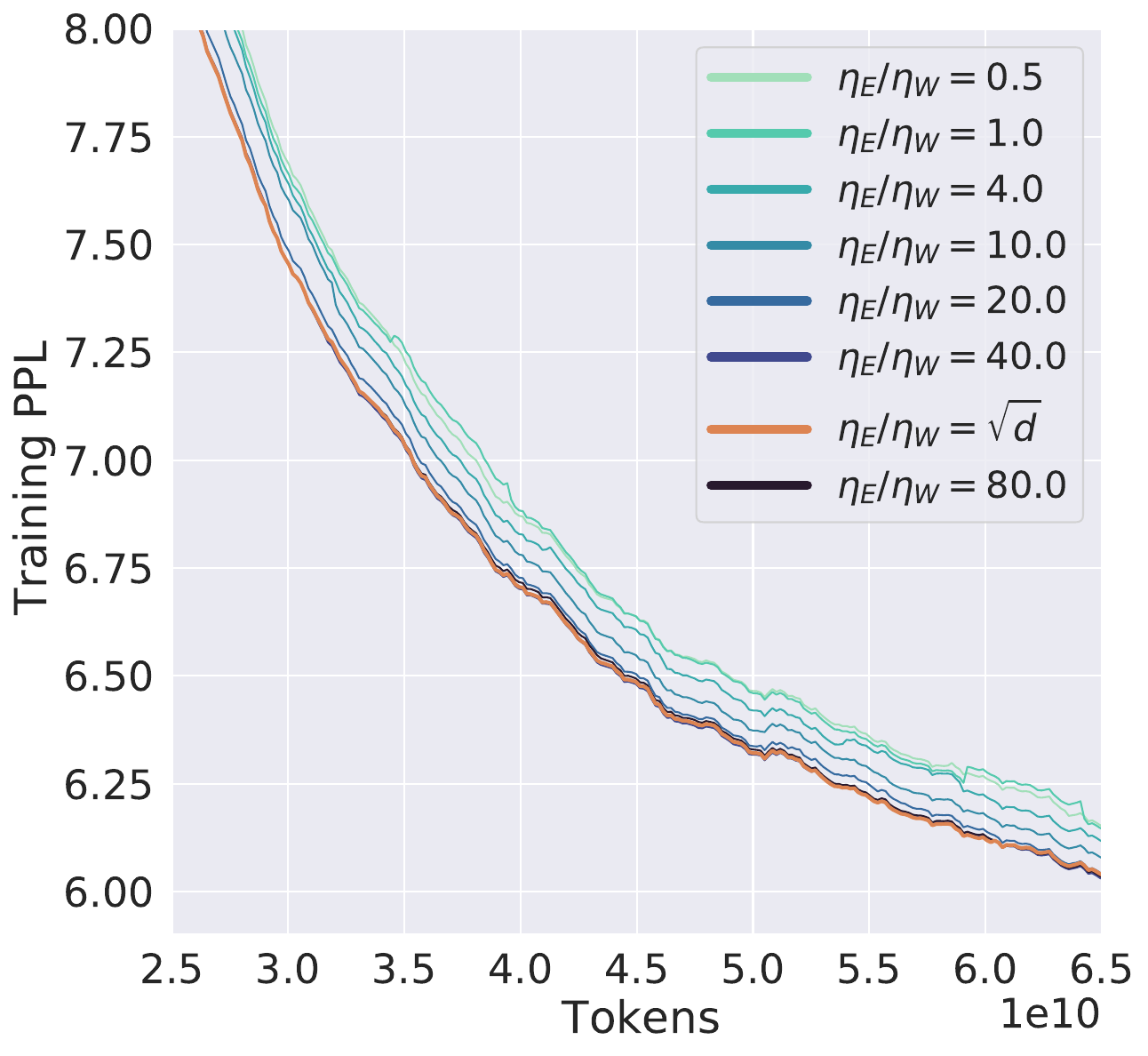}
\subcaption{different $\eta_E/\eta_W$ on 65B tokens.}
\label{fig:training_ppl_different_ratios}
\end{minipage}
\end{minipage}
\caption{LLM Pre-training Perplexity of 1B Transformer. Note $\sqrt{d}\approx45.3$ here.}
\end{figure}

% \begin{figure}[t]
%     \centering
%     \includegraphics[width=0.8\linewidth]{figures/ppl_training.pdf}
%     \caption{LLM Pre-training Perplexity of 1B Transformer}
%     \label{fig:training_ppl}
% \end{figure}

\begin{figure}[t]
    \centering
    \includegraphics[width=0.7\linewidth]{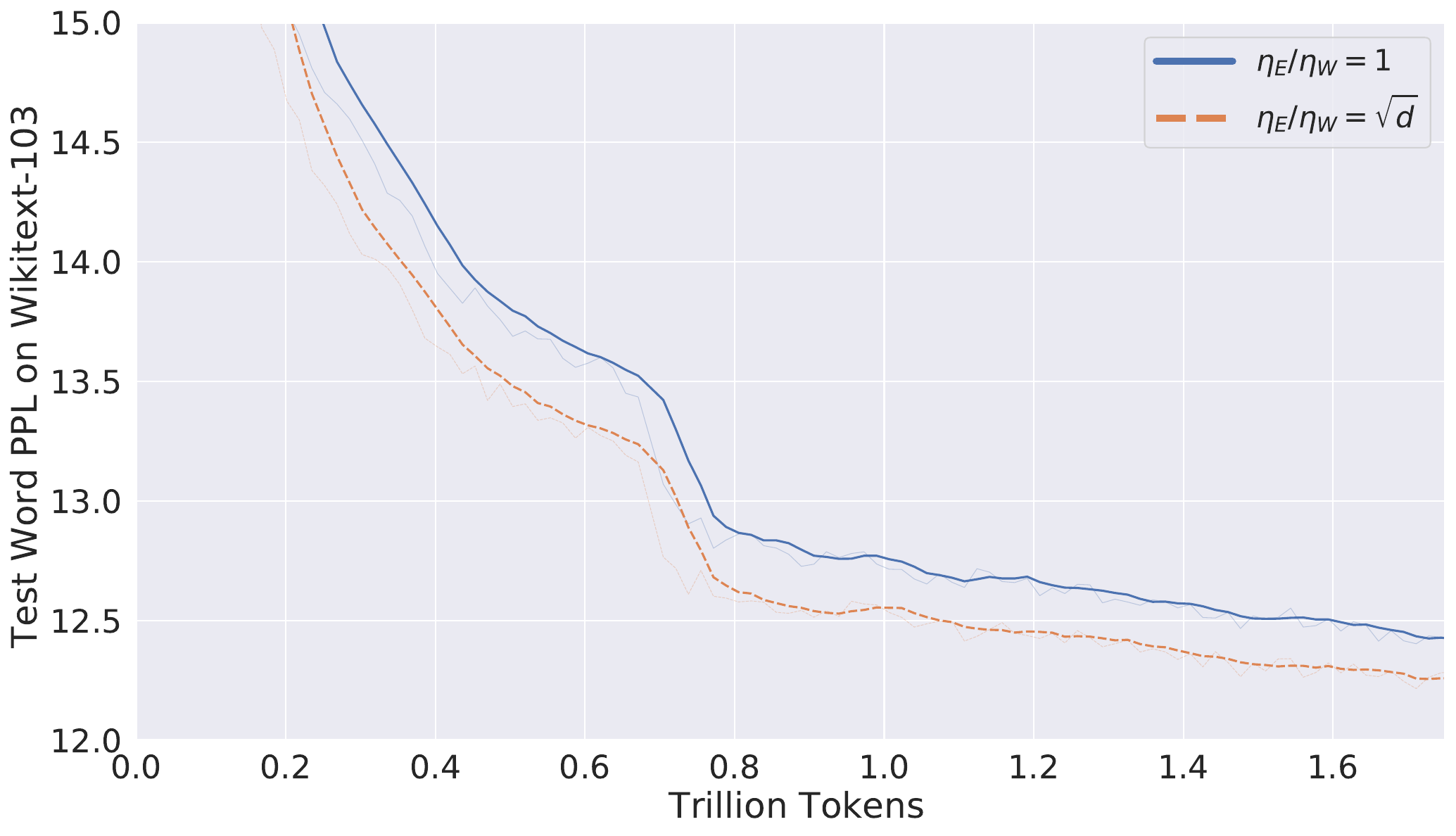}
    \caption{Perplexity of 1B Transformer on the Wikitext Test Set}
    \label{fig:test_ppl}
\end{figure}
\section{Discussion and Limitations}\label{sec:discussion}

In this work, we showed that large vocabulary size leads to different scaling rules than those predicted by $\mu$P. In this regime, the optimal ratio of embedding LR to hidden LR appears to roughly of order $\sqrt{d}$, which is different from the order $d$ ratio suggested by $\mu$P, and the ratio of 1 used in SP. 

While our suggested $\sqrt{d}$-rule yields better performance even for production-scale LLMs (compared to standard practice), it is important to understand that this rule is more about efficient feature learning than hyperparameter transfer. Specifically,  hyperparameter transfer occurs when the parametrization is optimal, in the sense that it leads to the most effective feature learning limit. It is unclear whether the limit with this scaling is optimal, and further theoretical and empirical studies are needed to investigate this. In our discussion in \cref{sec:theory}, we mentioned that optimal scaling rules for the learning rate are sensitive to token frequency, and therefore, it should not be unexpected that optimal parametrization should incorporate information about token frequency. 

Our theoretical results were derived in a simplified setting where we consider a simple model trained with SignSGD with 1 step . We expect the results to hold even for multiple steps with more practical optimizers such as Adam since the main ingredient in our analysis is the normalization process from such optimizers. The number of steps $t$ is important and we expect the results to hold when $t$ is fixed and $d, m$ go to infinity. The more general case where $t$ is free is more challenging.  

Finally, the interaction between embedding and projection layers is present in modern LLMs via the residual stream, which indicates that the large vocabulary regime is relevant for LLMs as well. Our empirical results further confirm this.

\section{Acknowledgment}
SH is supported by the NSF and the Simons Foundation for the Collaboration on the Theoretical Foundations of Deep Learning through awards DMS-2031883 and 81463.\\
\newpage

\bibliography{references}

\begin{thebibliography}{27}
\providecommand{\natexlab}[1]{#1}
\providecommand{\url}[1]{\texttt{#1}}
\expandafter\ifx\csname urlstyle\endcsname\relax
  \providecommand{\doi}[1]{doi: #1}\else
  \providecommand{\doi}{doi: \begingroup \urlstyle{rm}\Url}\fi

\bibitem[Blake et~al.(2025)Blake, Eichenberg, Dean, Balles, Prince, Deiseroth,
  Cruz-Salinas, Luschi, Weinbach, and
  Orr]{blake2025umupunitscaledmaximalupdate}
Charlie Blake, Constantin Eichenberg, Josef Dean, Lukas Balles, Luke~Y. Prince,
  Björn Deiseroth, Andres~Felipe Cruz-Salinas, Carlo Luschi, Samuel Weinbach,
  and Douglas Orr.
\newblock u-$\mu$p: The unit-scaled maximal update parametrization, 2025.
\newblock URL \url{https://arxiv.org/abs/2407.17465}.

\bibitem[Bordelon et~al.(2023)Bordelon, Noci, Li, Hanin, and
  Pehlevan]{bordelon2023depthwisehyperparametertransferresidual}
Blake Bordelon, Lorenzo Noci, Mufan~Bill Li, Boris Hanin, and Cengiz Pehlevan.
\newblock Depthwise hyperparameter transfer in residual networks: Dynamics and
  scaling limit, 2023.
\newblock URL \url{https://arxiv.org/abs/2309.16620}.

\bibitem[Chizat et~al.(2020)Chizat, Oyallon, and
  Bach]{chizat2020lazytrainingdifferentiableprogramming}
Lenaic Chizat, Edouard Oyallon, and Francis Bach.
\newblock On lazy training in differentiable programming, 2020.
\newblock URL \url{https://arxiv.org/abs/1812.07956}.

\bibitem[Chizat et~al.(2024)Chizat, Colombo, Fernández-Real, and
  Figalli]{chizat2024infinitewidth}
Lénaïc Chizat, Maria Colombo, Xavier Fernández-Real, and Alessio Figalli.
\newblock Infinite-width limit of deep linear neural networks.
\newblock \emph{Communications on Pure and Applied Mathematics}, 77\penalty0
  (10):\penalty0 3958--4007, 2024.
\newblock \doi{https://doi.org/10.1002/cpa.22200}.
\newblock URL \url{https://onlinelibrary.wiley.com/doi/abs/10.1002/cpa.22200}.

\bibitem[Everett et~al.(2024)Everett, Xiao, Wortsman, Alemi, Novak, Liu, Gur,
  Sohl-Dickstein, Kaelbling, Lee, and
  Pennington]{everett2024scalingexponentsparameterizationsoptimizers}
Katie Everett, Lechao Xiao, Mitchell Wortsman, Alexander~A. Alemi, Roman Novak,
  Peter~J. Liu, Izzeddin Gur, Jascha Sohl-Dickstein, Leslie~Pack Kaelbling,
  Jaehoon Lee, and Jeffrey Pennington.
\newblock Scaling exponents across parameterizations and optimizers, 2024.
\newblock URL \url{https://arxiv.org/abs/2407.05872}.

\bibitem[Hayou(2022)]{hayou2022on}
Soufiane Hayou.
\newblock On the infinite-depth limit of finite-width neural networks.
\newblock \emph{Transactions on Machine Learning Research}, 2022.

\bibitem[Hayou et~al.(2019)Hayou, Doucet, and Rousseau]{hayou19activation}
Soufiane Hayou, Arnaud Doucet, and Judith Rousseau.
\newblock On the impact of the activation function on deep neural networks
  training.
\newblock In Kamalika Chaudhuri and Ruslan Salakhutdinov, editors,
  \emph{Proceedings of the 36th International Conference on Machine Learning},
  volume~97 of \emph{Proceedings of Machine Learning Research}, pages
  2672--2680. PMLR, 09--15 Jun 2019.
\newblock URL \url{https://proceedings.mlr.press/v97/hayou19a.html}.

\bibitem[Hayou et~al.(2021)Hayou, Clerico, He, Deligiannidis, Doucet, and
  Rousseau]{hayou2021stableresnet}
Soufiane Hayou, Eugenio Clerico, Bobby He, George Deligiannidis, Arnaud Doucet,
  and Judith Rousseau.
\newblock Stable resnet.
\newblock In Arindam Banerjee and Kenji Fukumizu, editors, \emph{Proceedings of
  The 24th International Conference on Artificial Intelligence and Statistics},
  volume 130 of \emph{Proceedings of Machine Learning Research}, pages
  1324--1332. PMLR, 13--15 Apr 2021.

\bibitem[He et~al.(2015)He, Zhang, Ren, and
  Sun]{he2015delvingdeeprectifierssurpassing}
Kaiming He, Xiangyu Zhang, Shaoqing Ren, and Jian Sun.
\newblock Delving deep into rectifiers: Surpassing human-level performance on
  imagenet classification, 2015.
\newblock URL \url{https://arxiv.org/abs/1502.01852}.

\bibitem[Jacot et~al.(2020)Jacot, Gabriel, and
  Hongler]{jacot2020neuraltangentkernelconvergence}
Arthur Jacot, Franck Gabriel, and Clément Hongler.
\newblock Neural tangent kernel: Convergence and generalization in neural
  networks, 2020.
\newblock URL \url{https://arxiv.org/abs/1806.07572}.

\bibitem[Jordan et~al.(2024)Jordan, Jin, Boza, Jiacheng, Cesista, Newhouse, and
  Bernstein]{jordan2024muon}
Keller Jordan, Yuchen Jin, Vlado Boza, You Jiacheng, Franz Cesista, Laker
  Newhouse, and Jeremy Bernstein.
\newblock Muon: An optimizer for hidden layers in neural networks, 2024.
\newblock URL \url{https://kellerjordan.github.io/posts/muon/}.

\bibitem[Kingma and Ba(2017)]{kingma2017adammethodstochasticoptimization}
Diederik~P. Kingma and Jimmy Ba.
\newblock Adam: A method for stochastic optimization, 2017.
\newblock URL \url{https://arxiv.org/abs/1412.6980}.

\bibitem[Lingle(2025)]{lingle2025empiricalstudymuplearning}
Lucas Lingle.
\newblock An empirical study of $\mu$p learning rate transfer, 2025.
\newblock URL \url{https://arxiv.org/abs/2404.05728}.

\bibitem[Llama-Team(2024)]{2024llama3herdmodels}
Llama-Team.
\newblock The llama 3 herd of models, 2024.
\newblock URL \url{https://arxiv.org/abs/2407.21783}.

\bibitem[Merity et~al.(2016)Merity, Xiong, Bradbury, and
  Socher]{merity2016wikitext}
Stephen Merity, Caiming Xiong, James Bradbury, and Richard Socher.
\newblock Pointer sentinel mixture models, 2016.

\bibitem[Neal(1996)]{Neal1996}
Radford~M. Neal.
\newblock Priors for infinite networks.
\newblock In \emph{Bayesian Learning for Neural Networks}, volume 118 of
  \emph{Lecture Notes in Statistics}, pages 29--53. Springer New York, 1996.
\newblock ISBN 978-0-387-94724-2.
\newblock \doi{10.1007/978-1-4612-0745-0_2}.

\bibitem[Poole et~al.(2016)Poole, Lahiri, Raghu, Sohl-Dickstein, and
  Ganguli]{poole2016exponential}
B.~Poole, S.~Lahiri, M.~Raghu, J.~Sohl-Dickstein, and S.~Ganguli.
\newblock Exponential expressivity in deep neural networks through transient
  chaos.
\newblock \emph{30th Conference on Neural Information Processing Systems},
  2016.

\bibitem[Schoenholz et~al.(2017)Schoenholz, Gilmer, Ganguli, and
  Sohl-Dickstein]{deepinfoprop2017}
S.S. Schoenholz, J.~Gilmer, S.~Ganguli, and J.~Sohl-Dickstein.
\newblock Deep information propagation.
\newblock In \emph{International Conference on Learning Representations}, 2017.

\bibitem[Tao et~al.(2024)Tao, Liu, Dou, Muennighoff, Wan, Luo, Lin, and
  Wong]{tao2024scalinglawsvocabularylarger}
Chaofan Tao, Qian Liu, Longxu Dou, Niklas Muennighoff, Zhongwei Wan, Ping Luo,
  Min Lin, and Ngai Wong.
\newblock Scaling laws with vocabulary: Larger models deserve larger
  vocabularies, 2024.
\newblock URL \url{https://arxiv.org/abs/2407.13623}.

\bibitem[Team(2023)]{falcon2023falconseriesopenlanguage}
Falcon Team.
\newblock The falcon series of open language models, 2023.
\newblock URL \url{https://arxiv.org/abs/2311.16867}.

\bibitem[Team(2025)]{gemmateam2025gemma3technicalreport}
Gemma Team.
\newblock Gemma 3 technical report, 2025.
\newblock URL \url{https://arxiv.org/abs/2503.19786}.

\bibitem[Team(2024)]{abdin2024phi}
Phi Team.
\newblock Phi-3 technical report: A highly capable language model locally on
  your phone.
\newblock \emph{arXiv:2404.14219}, 2024.

\bibitem[Yang(2019)]{yang2019scaling}
G.~Yang.
\newblock Scaling limits of wide neural networks with weight sharing: Gaussian
  process behavior, gradient independence, and neural tangent kernel
  derivation.
\newblock \emph{arXiv preprint arXiv:1902.04760}, 2019.

\bibitem[Yang et~al.(2022)Yang, Hu, Babuschkin, Sidor, Liu, Farhi, Ryder,
  Pachocki, Chen, and Gao]{yang2022tensor}
Greg Yang, Edward~J Hu, Igor Babuschkin, Szymon Sidor, Xiaodong Liu, David
  Farhi, Nick Ryder, Jakub Pachocki, Weizhu Chen, and Jianfeng Gao.
\newblock Tensor programs v: Tuning large neural networks via zero-shot
  hyperparameter transfer.
\newblock \emph{arXiv preprint arXiv:2203.03466}, 2022.

\bibitem[Yang et~al.(2023)Yang, Yu, Zhu, and
  Hayou]{yang2023tensorprogramsvifeature}
Greg Yang, Dingli Yu, Chen Zhu, and Soufiane Hayou.
\newblock Tensor programs vi: Feature learning in infinite-depth neural
  networks, 2023.
\newblock URL \url{https://arxiv.org/abs/2310.02244}.

\bibitem[Zhang et~al.(2025)Zhang, Morwani, Vyas, Wu, Zou, Ghai, Foster, and
  Kakade]{zhang2025doescriticalbatchsize}
Hanlin Zhang, Depen Morwani, Nikhil Vyas, Jingfeng Wu, Difan Zou, Udaya Ghai,
  Dean Foster, and Sham Kakade.
\newblock How does critical batch size scale in pre-training?, 2025.
\newblock URL \url{https://arxiv.org/abs/2410.21676}.

\bibitem[Zipf(1932)]{Zipf1932}
George~Kingsley Zipf.
\newblock \emph{Selected Studies of the Principle of Relative Frequency in
  Language}.
\newblock Harvard University Press, Cambridge, MA, 1932.

\end{thebibliography}

\newpage
\appendix

\section{Infinite-width analysis and $\mu$P}
\label{app:add_details_infinite_width}

As the width $d$ grows, model hyperparameters such as  initialization variance and learning should be adapted to avoid numerical instabilities and ensure efficient learning. For instance, the initialization variance should scale as $1/d$ to prevent arbitrarily large pre-activations as we increase model width $d$ (e.g. He init \cite{he2015delvingdeeprectifierssurpassing}). To derive such scaling rules, a principled approach consist of analyzing statistical properties of key quantities in the model (e.g. pre-activations) as $d$ grows and then adjust the initialization, the learning rate, and the architecture itself to achieve desirable properties in the limit $d \to \infty$ \cite{poole2016exponential, deepinfoprop2017,  hayou19activation, yang2019scaling, chizat2024infinitewidth}.

In this context, \citet{yang2022tensor} introduced the Maximal Update Parameterization (or $\mu$P), a set of scaling rules for the initialization scheme, the learning rate, and the network architecture that ensure stability and maximal feature learning in the infinite width limit. Stability is defined by $Y_l^i = \Theta(1)$ for all $l$ and $i$ where the asymptotic notation `$\Theta(.)$' is with respect to width $d$ (see next paragraph for a formal definition), and feature learning is defined by $\Delta Y_l = \Theta(1)$, where $\Delta$ refers to the feature update after taking a gradient step. $\mu$P guarantees that these two conditions are satisfied at any training step $t$. Roughly speaking, $\mu$P specifies that hidden weights should be initialized with $\Theta(d^{-1/2})$ random weights, and weight updates should be of order $\Theta(d^{-1})$. Input weights should be initialized $\Theta(1)$ and the weights update should be $\Theta(1)$ as well. While the output weights should be initialized $\Theta(d^{-1})$ and updated with $\Theta(d^{-1})$. These rules ensure both stability and feature learning in the infinite-width limit, in contrast to standard parameterization (exploding features if the learning rate is well tuned), and kernel parameterizations (e.g. Neural Tangent Kernel parameterization where $\Delta Y_l = \Theta(d^{-1/2})$, i.e. no feature learning in the limit). 
\section{Proofs}\label{sec:proofs}
In this section, we provide proofs for \cref{thm:signsgd_asymptotics} and \cref{thm:large_vocab_regime}. The proofs rely mainly on technical results from  \cref{sec:technical_results}.

\subsection{Proof of \cref{thm:signsgd_asymptotics}}
We use the same notation from \cref{sec:theory}. We require the following assumption on the target distribution.
\begin{assumption}[Random signal at initialization]\label{assumption:random_signal_at_initialization}
 Assume that $(EW - Z)_{ij} \sim \normal(0,1)$ are iid. 
\end{assumption}
\cref{assumption:random_signal_at_initialization} is a realistic approximation of practical setup in which the output at initialization $EW$ is randomly distributed so that the difference $EW-Z$ is random as well. Assuming that it is centered just means that the model is initialized in a way that makes outputs of the same range as targets.

\textbf{Theorem \ref{thm:signsgd_asymptotics}.} [Asymptotics in $d$ and $m$]\\
\emph{Consider the following notation:
\begin{itemize}
    \item Initialization: $W \sim \normal(0,\sigma_W^2)$ and  $ E\sim \normal(0, \sigma_E^2)$
    \item For $i\in [m]$, the quantities $\bar{\delta}_E^i = \left(m^{-1}\E \|\delta_E^i\|^2\right)^{1/2}$ and $\bar{\delta}_W^i = \left(m^{-1}\E \|\delta_W^i\|^2\right)^{1/2}$ denote the average norm of $\delta_E^i$ and $\delta_W^i$ respectively.\footnote{Note that the average norm $\bar{\delta}_E^i = \left(m^{-1}\E \|\delta_E^i\|^2\right)^{1/2}$ is a good indicator of the magnitude of the coordinates of $\delta_E^i$ since these coordinates have the same distribution.}
    \item  \(\displaystyle
  \bar\alpha^{2}\;:=\;\frac1m\sum_{k=1}^{m}\alpha_k^{2}\) denotes the average squared-frequencies.
\end{itemize}
Then, under \cref{assumption:random_signal_at_initialization}, for all $i\in [m]$, the following holds
\begin{itemize}
    \item $\bar{\delta}_E^i = \Theta_{m,d}\left(\eta_E \sigma_W \sqrt{d + \frac{2d(d-1)}{\pi m})}\right)$
    \item $\bar{\delta}_W^i = \Theta_{m,d}\left(\eta_W \sigma_E \sqrt{d + \frac{\alpha_i^2}{\bar\alpha^2}\frac{2d(d-1)}{\pi m})}\right)$
\end{itemize}}

\begin{proof}

The proof is straightforward by applying \cref{thm:indep} and \cref{thm:sign_hetero} to $\delta_E^i$ and $\delta_W^i$.
    
\end{proof}

\subsection{Proof of \cref{thm:large_vocab_regime}}
The formal statement of \cref{thm:large_vocab_regime} includes \cref{assumption:random_signal_at_initialization} as well. Actually, \cref{thm:large_vocab_regime} is a corollary of \cref{thm:signsgd_asymptotics} and \cref{lemma:alpha_asymptotics} and the proof is straightforward from the two results.

\subsection{Technical results for SignSGD dynamics}\label{sec:technical_results}

\subsubsection{Technical Result for $\delta_E^i$}
Fix two positive integers \(d,m\), and consider a $d\times m$ matrix $W$
\[
  W=\begin{bmatrix}W_{1}\\[-2pt]\vdots\\[-2pt]W_{d}\end{bmatrix}\in\mathbb R^{d\times m},
  \qquad
  W_{j}\stackrel{\text{i.i.d.}}{\sim}\mathcal N(0,\sigma_W^2 I_{m}).
\]
Let \(v\in\mathbb R^{m}\) be a \emph{random} vector with i.i.d.\ \(\mathcal N(0,1)\) coordinates, independent of \(W\).  
Define
\[
  M_{j}\;=\;\langle v,W_{j}\rangle,\quad
  S_{j}\;=\;\operatorname{sign}(M_{j})\in\{\pm1\},\quad
  X\;= S\, W = \;\sum_{j=1}^{d}S_{j}\,W_{j}\in\mathbb R^{1\times m}.
\]

We are interested in characterizing the expectation and covariance structure of the matrix $X$. The following standard result will be useful for this purpose.
\begin{lemma}[Stein’s lemma for a sign–Gaussian product]\label{lem:stein}
Let \((Z,G)\) be a bivariate centred Gaussian vector with
\(\operatorname{Var}(Z)=\operatorname{Var}(G)=1\)
and correlation \(\rho=\mathbb E[ZG]\).
Then
\[
  \boxed{\;
    \mathbb E\bigl[\operatorname{sign}(Z)\,G\bigr]
      \;=\;\sqrt{\frac{2}{\pi}}\;\rho.
  \;}
\]
\end{lemma}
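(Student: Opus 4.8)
The goal is to establish the identity $\mathbb E[\operatorname{sign}(Z)\,G] = \sqrt{2/\pi}\,\rho$ for a bivariate standard Gaussian $(Z,G)$ with correlation $\rho$. The plan is to exploit the Gaussian conditioning/decomposition trick rather than differentiate a moment-generating function. First I would write $G = \rho Z + \sqrt{1-\rho^2}\,G'$ where $G' \sim \mathcal N(0,1)$ is independent of $Z$; this is the standard orthogonal decomposition of a jointly Gaussian pair. Then by linearity and independence,
\[
\mathbb E[\operatorname{sign}(Z)\,G] = \rho\,\mathbb E[\operatorname{sign}(Z)\,Z] + \sqrt{1-\rho^2}\,\mathbb E[\operatorname{sign}(Z)]\,\mathbb E[G'].
\]
The second term vanishes because $\mathbb E[G'] = 0$ (and also $\mathbb E[\operatorname{sign}(Z)] = 0$ by symmetry), so everything reduces to computing the single scalar $\mathbb E[\operatorname{sign}(Z)\,Z] = \mathbb E[|Z|]$.

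The remaining step is the classical computation of the mean absolute value of a standard normal: $\mathbb E[|Z|] = 2\int_0^\infty z\,\frac{1}{\sqrt{2\pi}}e^{-z^2/2}\,dz = \sqrt{2/\pi}$, which follows from the antiderivative $-\frac{1}{\sqrt{2\pi}}e^{-z^2/2}$. Substituting back gives $\mathbb E[\operatorname{sign}(Z)\,G] = \rho\sqrt{2/\pi}$, which is the claim. Alternatively one could invoke Stein's lemma directly in the form $\mathbb E[h(Z)\,G] = \rho\,\mathbb E[h'(Z)]$ for absolutely continuous $h$ with the distributional derivative $\operatorname{sign}'= 2\delta_0$, giving $\rho\cdot 2\,\varphi(0) = \rho\sqrt{2/\pi}$; but the decomposition argument is cleaner since it avoids distributional subtleties.

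There is essentially no serious obstacle here — this is a short, self-contained lemma. The only point requiring a word of care is the degenerate case $\rho = \pm 1$, where $G = \pm Z$ almost surely and the decomposition has a zero-variance component; in that case the identity still holds trivially since $\mathbb E[\operatorname{sign}(Z)\,(\pm Z)] = \pm\mathbb E[|Z|] = \pm\sqrt{2/\pi}$, matching $\sqrt{2/\pi}\,\rho$. So I would just note that the argument covers $|\rho| \le 1$ uniformly. The harder analytic work in the paper lies in the subsequent technical results (the concentration of $X = SW$ and the heterogeneous sign sums feeding into \cref{thm:signsgd_asymptotics}); this lemma is merely the elementary building block that supplies the constant $\sqrt{2/\pi}$ appearing throughout those asymptotics.
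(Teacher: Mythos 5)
Your proof is correct and follows the same route as the paper's: decompose $G = \rho Z + \sqrt{1-\rho^2}\,G'$ with $G'$ independent of $Z$, kill the $G'$ term, and reduce to $\mathbb E[|Z|] = \sqrt{2/\pi}$. The extra remarks (the distributional Stein's-lemma alternative and the degenerate $\rho = \pm 1$ case) are fine but not needed.
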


\begin{proof}
Because \((Z,G)\) is joint Gaussian,
\((Z,G)=(Z,\rho Z+\sqrt{1-\rho^{2}}\,G')\)
with
\(G'\sim\mathcal N(0,1)\) independent of \(Z\).
Write \(\varphi(z)=\frac{1}{\sqrt{2\pi}}e^{-z^{2}/2}\) for the standard Gaussian
density.  Using Fubini (iterated integrals) and the symmetry of \(\varphi\),
\[
  \mathbb E\bigl[\operatorname{sign}(Z)\,G\bigr]
   =\rho\,\mathbb E\bigl[\lvert Z\rvert\bigr]
   =\rho\int_{\mathbb R}|z|\varphi(z)\,dz
   =\rho\sqrt{\frac{2}{\pi}}.\qedhere
\]
\end{proof}

The factor \(\sqrt{2/\pi}\) is simply the mean of \(|Z|\) for
\(Z\sim\mathcal N(0,1)\).

Now, we use this result to obtain analytic formulas for the mean and covariance of X.

\bigskip
\begin{thm}[Mean and covariance for independent \(v\)]\label{thm:indep}
With the setup above, we have
\[
  \boxed{\;
    \mathbb E[X]=0,\qquad
    \operatorname{Cov}(X)
      = d\,I_{m}
        +\frac{2}{\pi m}\,d(d-1)\,I_{m}.
  \;}
\]
\end{thm}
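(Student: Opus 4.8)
The plan is to compute $\mathbb{E}[X]$ and $\operatorname{Cov}(X)$ by conditioning on the sign vector $S$ (equivalently, on the projections $M_j = \langle v, W_j\rangle$) and exploiting the Gaussian structure. First I would note that $X = \sum_j S_j W_j$, and since flipping the sign of all coordinates of $v$ flips every $S_j$ and leaves $W$ unchanged, the pair $(S,W)$ has the same law as $(-S,W)$; hence $\mathbb{E}[X] = \mathbb{E}[\sum_j S_j W_j] = -\mathbb{E}[\sum_j S_j W_j]$, giving $\mathbb{E}[X]=0$. For the covariance, write $\operatorname{Cov}(X) = \mathbb{E}[X^\top X]$ (a symmetric matrix since $X$ is a row vector; I would be careful with the row/column convention the paper uses) and expand:
\[
\mathbb{E}[X^\top X] = \sum_{j=1}^d \mathbb{E}[W_j^\top W_j] + \sum_{j\neq k} \mathbb{E}[S_j S_k\, W_j^\top W_k].
\]
The diagonal-in-$j$ terms give $\sum_j \mathbb{E}[W_j^\top W_j] = d\,\sigma_W^2 I_m$ because $S_j^2 = 1$ and $W_j \sim \mathcal{N}(0,\sigma_W^2 I_m)$. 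So the whole problem reduces to the cross terms $\mathbb{E}[S_j S_k W_j^\top W_k]$ for $j\neq k$.

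For a fixed pair $j\neq k$, I would condition on $v$ and use independence of $W_j, W_k$ given $v$: $\mathbb{E}[S_j S_k W_j^\top W_k \mid v] = \mathbb{E}[S_j W_j \mid v]^\top\,\mathbb{E}[S_k W_k \mid v]$, wait — more carefully, $S_j$ depends only on $W_j$ (and $v$), so $\mathbb{E}[S_j S_k (W_j)_a (W_k)_b \mid v] = \mathbb{E}[S_j (W_j)_a \mid v]\,\mathbb{E}[S_k (W_k)_b \mid v]$. Now $S_j = \operatorname{sign}(\langle v, W_j\rangle)$ and $(W_j)_a$ are jointly Gaussian given $v$, with $\langle v, W_j\rangle \sim \mathcal{N}(0, \sigma_W^2 \|v\|^2)$ and $\operatorname{Cov}((W_j)_a, \langle v, W_j\rangle) = \sigma_W^2 v_a$. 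Applying Stein's lemma (\cref{lem:stein}) after normalizing both to unit variance yields $\mathbb{E}[S_j (W_j)_a \mid v] = \sqrt{2/\pi}\,\sigma_W\, v_a/\|v\|$. Therefore $\mathbb{E}[S_j S_k (W_j)_a (W_k)_b \mid v] = \tfrac{2}{\pi}\sigma_W^2\, v_a v_b / \|v\|^2$, and averaging over $v$: since $v$ has i.i.d.\ $\mathcal{N}(0,1)$ coordinates, $v/\|v\|$ is uniform on the sphere $S^{m-1}$, so $\mathbb{E}[v_a v_b/\|v\|^2] = \tfrac{1}{m}\delta_{ab}$. Summing over the $d(d-1)$ ordered pairs $j\neq k$ gives the cross-term contribution $\tfrac{2}{\pi m}\sigma_W^2\, d(d-1)\, I_m$, and adding the diagonal piece gives the boxed formula (with the $\sigma_W^2$ factors — I should check whether the theorem statement as written has absorbed $\sigma_W = 1$ or whether I should carry it through; the statement writes $W_j \sim \mathcal{N}(0,\sigma_W^2 I_m)$ but the boxed result omits $\sigma_W^2$, so I would reconcile this, presumably the intended claim carries a $\sigma_W^2$).

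The main obstacle is the joint Gaussian manipulation in the cross term: one must carefully set up the conditional distribution of $\big(\operatorname{sign}(\langle v,W_j\rangle), (W_j)_a\big)$ given $v$, normalize both entries to unit variance to invoke \cref{lem:stein} in the stated form, and then handle the averaging of $v_a v_b/\|v\|^2$ over the isotropic Gaussian $v$ — recognizing that this is exactly $\tfrac{1}{m}I_m$ by symmetry (each diagonal entry equals $\mathbb{E}[v_1^2/\|v\|^2] = 1/m$ since $\sum_a v_a^2/\|v\|^2 = 1$, and off-diagonal entries vanish by the sign symmetry $v_a \mapsto -v_a$). The rest — the vanishing of $\mathbb{E}[X]$, the diagonal terms, and the bookkeeping of the number of cross pairs — is routine. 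I would also remark that no independence between $v$ and the rest is needed beyond what is assumed, and that the only place randomness of $v$ enters nontrivially is through the spherical-uniformity of $v/\|v\|$.
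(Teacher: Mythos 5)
Your proof is correct and reaches the same boxed formula, but it follows a slightly different decomposition than the paper's. The paper first conditions on $v$, computes both the conditional mean $\mathbb{E}[X\mid v]=d\sqrt{2/\pi}\,v/\lVert v\rVert$ and the conditional covariance $\operatorname{Cov}(X\mid v)=d\bigl(I_m-\tfrac{2}{\pi}\tfrac{v v^\top}{\lVert v\rVert^2}\bigr)$, and then combines them through the law of total variance; the $-\tfrac{2d}{\pi m}$ piece from the inner covariance and the $+\tfrac{2d^2}{\pi m}$ piece from the outer covariance add to give $\tfrac{2}{\pi m}d(d-1)$. You instead expand $\mathbb{E}[X^\top X]$ directly into $d$ diagonal terms plus $d(d-1)$ cross terms: the diagonal terms are immediate since $S_j^2=1$, and only the cross terms need Stein's lemma (\cref{lem:stein}), applied twice via conditional independence of $(S_j,W_j)$ and $(S_k,W_k)$ given $v$. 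This bypasses the conditional-covariance bookkeeping entirely, which is a small simplification; both arguments then close with the same isotropy fact $\mathbb{E}[v_a v_b/\lVert v\rVert^2]=\delta_{ab}/m$, so the essential ingredients coincide. Your sign-flip symmetry argument for $\mathbb{E}[X]=0$ (using that $(S,W)\overset{d}{=}(-S,W)$ under $v\mapsto -v$) is also a clean alternative to the paper's appeal to $\mathbb{E}[v/\lVert v\rVert]=0$. Finally, your observation about the missing $\sigma_W^2$ factor is correct: the proof in the paper tacitly normalizes $\sigma_W=1$ (it writes $W_j\sim\mathcal N(0,I_m)$ in Step~1), and the boxed covariance should read $\sigma_W^2\bigl(d+\tfrac{2}{\pi m}d(d-1)\bigr)I_m$ to be consistent with the setup; the $\sigma_W$ factor reappears when the result is invoked in \cref{thm:signsgd_asymptotics}, so the downstream conclusions are unaffected.
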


\begin{proof}
The proof is divided into three steps. We first obtain conditional results on $v$, then derive the mean and covariance in steps 2 and 3 by averaging over $W$.\\

\paragraph{\textbf{Step 1: conditional moments given \(v\).}}
\begin{enumerate}
  \item \emph{Conditional mean of one summand.}  
        Because \(W_{j}\sim\mathcal N(0,I_{m})\) and \(M_{j}=\langle v,W_{j}\rangle\), for any $i \in [m]$ 
        the pair \((M_{j},W_{ji})\) is jointly Gaussian with
        correlation
        \(
          \rho:=\tfrac{v_i}{\lVert v\rVert}.
        \)
        Stein’s lemma above gives
        \[
          \mathbb E_{W}[\,\operatorname{sign}(M_{j})\,W_{j}\mid v]
            \;=\;
            \sqrt{\frac{2}{\pi}}\;\frac{v}{\lVert v\rVert}.
        \]
  \item \emph{Conditional covariance of one summand.}  
        Still conditional on \(v\),
        \(\operatorname{sign}(M_{j})\) is \(\pm1\) with equal probability,
        independent of the magnitude of \(W_{j}\).  A direct computation gives
        \(\operatorname{Cov}_{W}(S_{j}W_{j}\mid v)=I_{m} - \frac{2}{\pi} \frac{v}{\|v\|} \frac{v^\top}{\|v\|} \).
\end{enumerate}
Because the rows \(W_{1},\dots,W_{d}\) are independent,
\begin{equation}\label{eq:cond-v}
  \mathbb E[X\mid v]
     = d\,\sqrt{\tfrac{2}{\pi}}\;\frac{v}{\lVert v\rVert},
  \qquad
  \operatorname{Cov}(X\mid v)=d\,\left(I_{m}- \frac{2}{\pi} \frac{v}{\|v\|} \frac{v^\top}{\|v\|}\right).
\end{equation}

\smallskip
\paragraph{\textbf{Step 2: unconditional mean.}}
The random vector \(v/\lVert v\rVert\) is \emph{uniform} on the sphere
\(S^{m-1}\), hence symmetric about the origin.  Taking expectations in
\eqref{eq:cond-v} yields
\[
  \mathbb E[X]=\mathbb E_{v}\bigl[\mathbb E[X\mid v]\bigr]
             = d\,\sqrt{\tfrac{2}{\pi}}\;
               \mathbb E\!\Bigl[\frac{v}{\lVert v\rVert}\Bigr]
             = 0.
\]

\smallskip
\paragraph{\textbf{Step3 : unconditional covariance.}}
Write $Y_{j}:=S_{j}W_{j}$, so that $X=\sum_{j=1}^{d}Y_{j}$.  
For every fixed $v$ we have, from Step 1,
\[
  \mathbb{E}[Y_{j}\mid v]=\mu(v)
      :=\sqrt{\frac{2}{\pi}}\;\frac{v}{\lVert v\rVert},
  \qquad
  \operatorname{Cov}(Y_{j}\mid v)
      =I_{m}-\mu(v)\mu(v)^{\!\top}.
\]
By applying the \emph{law of total variance} we can write the covariance of $X$ as follows\\
\[\operatorname{Cov}(X)=\mathbb{E}_{v}[\operatorname{Cov}(X\mid v)]
      +\operatorname{Cov}_{v}(\mathbb{E}[X\mid v]).\]

\begin{enumerate}
  \item \emph{First term.}
        Because the $Y_{j}$’s are conditionally independent,
        \[
          \mathbb{E}_{v}\bigl[\operatorname{Cov}(X\mid v)\bigr]
            =\mathbb{E}_{v}\Bigl[\sum_{j=1}^{d}\operatorname{Cov}(Y_{j}\mid v)\Bigr]
            =d\,\Bigl(I_{m}
                     -\underbrace{\mathbb{E}_{v}\!\bigl[\mu(v)\mu(v)^{\!\top}\bigr]}
                              _{=\;\frac{2}{\pi m}I_{m}}\Bigr)
            =\Bigl(d-\frac{2d}{\pi m}\Bigr)I_{m}.
        \]
        (We used $\mathbb{E}_{v}[\,v/\lVert v\rVert\;v^{\!\top}\!/\lVert v\rVert]
        =I_{m}/m$ by rotational invariance.)
  \item \emph{Sign–correlation term.}
        From Step1, $\mathbb{E}[X\mid v]=d\,\mu(v)$, so
        \[
          \operatorname{Cov}_{v}\bigl(\mathbb{E}[X\mid v]\bigr)
             =d^{2}\,\operatorname{Cov}_{v}\!\bigl(\mu(v)\bigr)
             =d^{2}\,\frac{2}{\pi}\,
               \operatorname{Cov}_{v}\!\Bigl(\frac{v}{\lVert v\rVert}\Bigr)
             =\frac{2d^{2}}{\pi m}\,I_{m}.
        \]
\end{enumerate}

Adding the two contributions gives
\[
  \operatorname{Cov}(X)
     =\Bigl[d-\tfrac{2d}{\pi m}
            +\tfrac{2d^{2}}{\pi m}\Bigr]I_{m}
     \;=\;\Bigl[d+\tfrac{2}{\pi m}\,d(d-1)\Bigr]I_{m},
\]
which is the covariance stated in the theorem.
\end{proof}

\subsubsection{Technical Result for $\delta_W^i$}
The previous technical lemma is used to derive the asymptotic behavior of $\delta_E^i$. Here, we prove a variant that will be useful for $\delta_W^i$.
\begin{thm}[Second moment of $\,X = E_i\,S(E^{\top} M)$]%
\label{thm:sign_hetero}
Fix integers $m,d\ge 1$ and independent random matrices
\[
  E=(E_{jk})_{1\le j\le m,\,1\le k\le d}
     \ \stackrel{\text{i.i.d.}}{\sim}\ \mathcal N(0,1),
  \qquad
  M=(M_{ja})_{1\le j,a\le m},
  \quad
  M_{ja}\stackrel{\text{i.i.d.\ across }a}{\sim}\mathcal N(0,\alpha_j^{2}),
\]
where the \emph{row variances} $\alpha_1^{2},\dots,\alpha_m^{2}>0$ are
deterministic and $M$ is independent of $E$.
For a fixed row--index $i\in\{1,\dots,m\}$ set
\[
  S \;=\; \operatorname{sign}\!\bigl(E^{\top}M\bigr)\in\{\pm1\}^{d\times m},
  \qquad
  X \;:=\; E_i\,S \;\in\mathbb R^{m}.
\]

Write  
\(\displaystyle
  \bar\alpha^{2}\;:=\;\frac1m\sum_{k=1}^{m}\alpha_k^{2}
\)
for the mean row–variance of $M$.  
Then every coordinate of \(X\) has mean zero and

\[
  \boxed{\;
    \operatorname{Var}(X_k)
      \;=\;
      d
      \;+\;
      \frac{2}{\pi}\;
      \frac{\alpha_i^{2}}{\bar\alpha^{2}}\;
      \frac{d(d-1)}{m},
      \qquad 1\le k\le m.
  \;}
\]

More precisely,
\(\displaystyle
  \operatorname{Cov}(X)
      =\bigl[d+\tfrac{2}{\pi}\tfrac{\alpha_i^{2}}{\bar\alpha^{2}}
              \tfrac{d(d-1)}{\,m}\bigr]\,I_m.
\)

\end{thm}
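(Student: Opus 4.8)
\textbf{Proof plan for Theorem \ref{thm:sign_hetero}.}

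The plan is to mirror the structure of the proof of \cref{thm:indep}: condition on the matrix $M$, compute the conditional mean and covariance of $X$ over the Gaussian matrix $E$ using Stein's lemma (\cref{lem:stein}), and then average over $M$ using rotational invariance. Write $X = E_i S = \sum_{j=1}^{d} E_{ij}\, S_{j\cdot}$ where $S_{j\cdot}$ is the $j$-th row of $S$, i.e.\ $S_{ja} = \operatorname{sign}\bigl(\sum_{\ell=1}^{m} E_{\ell a} M_{\ell j}\bigr)$. The key observation is that for a fixed column $a$, the pair $\bigl(E_{ia},\, (E^\top M)_{ja}\bigr) = \bigl(E_{ia},\, \sum_\ell E_{\ell a} M_{\ell j}\bigr)$ is jointly Gaussian conditional on $M$, with correlation $\rho_{ij} := M_{ij}/\|M_{\cdot j}\|$ where $M_{\cdot j}$ is the $j$-th column of $M$. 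Here I am using that the columns of $E$ are independent with i.i.d.\ $\mathcal N(0,1)$ entries, so only the $i$-th entry $E_{ia}$ is correlated with the linear form defining $S_{ja}$, and the correlation depends only on $M$.

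First I would establish the conditional mean. Since $S_{ja} = \operatorname{sign}\bigl((E^\top M)_{ja}\bigr)$ and $E_{ia}$ are jointly Gaussian given $M$, Stein's lemma gives $\E[E_{ia} S_{ja} \mid M] = \sqrt{2/\pi}\,\rho_{ij}$. Because the columns $a$ are i.i.d., $\E[X_a \mid M] = \sum_{j=1}^d \sqrt{2/\pi}\,\rho_{ij}$ for each $a$; averaging over $M$ and using that each $\rho_{ij} = M_{ij}/\|M_{\cdot j}\|$ is symmetric about zero yields $\E[X]=0$. Next, for the conditional covariance: given $M$, the entries of $X$ across different columns $a \ne b$ are \emph{conditionally independent} (since columns of $E$ are independent), so $\operatorname{Cov}(X\mid M)$ is diagonal, and I just need $\operatorname{Var}(X_a \mid M) = \operatorname{Var}\bigl(\sum_j E_{ia} S_{ja} \mid M\bigr)$. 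Expanding, $\E[X_a^2 \mid M] = \sum_j \E[E_{ia}^2 S_{ja}^2 \mid M] + \sum_{j\ne j'} \E[E_{ia}^2 S_{ja} S_{j'a}\mid M] = d + \sum_{j\ne j'}\E[E_{ia}^2 S_{ja}S_{j'a}\mid M]$, using $S_{ja}^2 = 1$. For the cross terms with $j \ne j'$, I would write $E_{ia} = \rho_{ij}\, r + \sqrt{1-\rho_{ij}^2}\, e'$ type decompositions, or more cleanly condition on the value $(E^\top M)_{\cdot a}$ is determined by; the term $\E[E_{ia}^2 S_{ja}S_{j'a}\mid M]$ should reduce, after subtracting the square of the conditional mean, to something involving $\rho_{ij}\rho_{ij'}$. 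Then applying the law of total variance, $\operatorname{Var}(X_a) = \E_M[\operatorname{Var}(X_a\mid M)] + \operatorname{Var}_M(\E[X_a\mid M])$, and both pieces are evaluated using $\E_M[\rho_{ij}\rho_{ij'}] = \E_M\bigl[\tfrac{M_{ij}M_{ij'}}{\|M_{\cdot j}\|\|M_{\cdot j'}\|}\bigr]$.

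The main obstacle, and the step where the $\alpha_i^2/\bar\alpha^2$ factor enters, is evaluating $\E_M\bigl[\rho_{ij}^2\bigr] = \E_M\bigl[M_{ij}^2/\|M_{\cdot j}\|^2\bigr]$ for a single $j$ (the $j\ne j'$ cross expectations vanish by independence of distinct columns of $M$, since $M_{ij}$ and $M_{ij'}$ are independent and mean zero, so only the diagonal $j = j'$ survives in $\operatorname{Var}_M(\E[X_a\mid M])$ as well as in the total-variance bookkeeping). Here $M_{\cdot j} = (M_{1j},\dots,M_{mj})^\top$ has independent entries with $M_{\ell j}\sim\mathcal N(0,\alpha_\ell^2)$, so $M_{ij}^2/\|M_{\cdot j}\|^2 = (\alpha_i^2 g_i^2)/\bigl(\sum_\ell \alpha_\ell^2 g_\ell^2\bigr)$ with $g_\ell$ i.i.d.\ $\mathcal N(0,1)$. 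Computing $\E$ of this ratio exactly is not elementary for general $\alpha_\ell$, so I expect the theorem is using the approximation (valid as $m\to\infty$ by concentration of $\sum_\ell \alpha_\ell^2 g_\ell^2$ around $m\bar\alpha^2$) that $\E\bigl[M_{ij}^2/\|M_{\cdot j}\|^2\bigr] \approx \alpha_i^2/(m\bar\alpha^2)$; indeed the boxed statement is really an asymptotic/$\Theta$-type identity consistent with \cref{thm:signsgd_asymptotics}. Granting this, the cross-term sum contributes $\approx d(d-1)\cdot \tfrac{2}{\pi}\cdot\tfrac{\alpha_i^2}{m\bar\alpha^2}$ after combining the conditional-covariance cross terms and the total-variance correction exactly as in the proof of \cref{thm:indep} (where the $-2d/(\pi m)$ and $+2d^2/(\pi m)$ combined into $\tfrac{2}{\pi m}d(d-1)$), giving $\operatorname{Var}(X_a) = d + \tfrac{2}{\pi}\tfrac{\alpha_i^2}{\bar\alpha^2}\tfrac{d(d-1)}{m}$, and diagonality of $\operatorname{Cov}(X)$ follows from the conditional independence across columns plus the isotropy of each column's distribution. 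I would close by remarking that when all $\alpha_\ell$ are equal this recovers exactly \cref{thm:indep}.
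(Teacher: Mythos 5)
Your proof plan is structurally the same as the paper's: condition on $M$, apply Stein's lemma (\cref{lem:stein}) to the jointly Gaussian pair $(E_{ia},\,(E^\top M)_{ak})$, expand the second moment of $X_k$ into a diagonal contribution $d$ plus $d(d-1)$ off-diagonal cross terms, and reduce everything to evaluating $\E_M[\rho_k^2]$ with $\rho_k = M_{ik}/\|M_{\cdot k}\|$. (Your indexing is transposed in a few places --- $(E^\top M)_{ja}=\sum_\ell E_{\ell j}M_{\ell a}$ and the correlation is governed by the $k$-th column of $M$, not by the row index of $S$ --- but that is only bookkeeping.) The place where you part ways with the paper is the evaluation of $\E_M[\rho_k^2]$, and your instinct to be suspicious there is correct.

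The paper asserts $\E[\rho_k^2]=\alpha_i^2/\sum_j\alpha_j^2$ \emph{exactly}, via the claim that $\E\bigl[g_{ik}^2/\sum_j\alpha_j^2g_{jk}^2\bigr]$ is independent of $i$ by exchangeability of $(g_{1k}^2,\dots,g_{mk}^2)$. This is not correct: permuting the $g$'s does not permute the weights $\alpha_j^2$ in the denominator, so the expectation does depend on $i$. A concrete counterexample: with $m=2$, $\alpha_1^2=1$, $\alpha_2^2=4$, $i=1$, a polar-coordinate computation gives $\E\bigl[g_1^2/(g_1^2+4g_2^2)\bigr]=1/3$, whereas the formula predicts $\alpha_1^2/(\alpha_1^2+\alpha_2^2)=1/5$. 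Hence the boxed identity in \cref{thm:sign_hetero} is false as stated for unequal $\alpha_j$; it is exact only in the isotropic case, where it collapses to \cref{thm:indep}. Your proposed repair via concentration also does not go through under \cref{assump:zipflaw}: with $\alpha_j\propto j^{-a}$ and $a$ near $1$, the sum $\sum_j\alpha_j^2g_j^2$ is dominated by its first few summands, and the relative variance $\sum_j\alpha_j^4/(\sum_j\alpha_j^2)^2$ does not vanish as $m\to\infty$, so there is no concentration around $m\bar\alpha^2$. What does hold, and is all that the downstream $\Theta$-statements in \cref{thm:signsgd_asymptotics} and \cref{thm:large_vocab_regime} actually need, is the $\Theta$-version for fixed $i$: for a high-frequency token, both $\E[\rho_k^2]$ and $\alpha_i^2/(m\bar\alpha^2)$ converge to positive constants in $(0,1)$ and are therefore $\Theta_m(1)$, so the $\Theta$-asymptotics survive even though the specific constant in the boxed formula is wrong. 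In short, you were right to distrust the exact identity, but concentration is not why it ``approximately'' holds; the lemma should simply be read as a $\Theta$-asymptotic rather than an exact covariance formula.
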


\begin{proof}
Throughout the proof we use the Stein identity stated in \cref{lem:stein}:  
if $(Z,G)$ is a centred bivariate Gaussian with $\operatorname{Corr}(Z,G)=\rho$
then
\(
  \mathbb E[\operatorname{sign}(Z)\,G]=\sqrt{2/\pi}\,\rho.
\)

\smallskip
\paragraph{\textbf{1.  Zero Mean.}}
Conditioning on $E$, each entry of $E^{\top}M$ a centered Gaussian variable, hence
\(
  \mathbb E[S(E^\top M)\mid E]=0
\).
Therefore \(\mathbb E[X]=0\).

\smallskip
\paragraph{\textbf{2.  Second Moment.}}

Fix a column index $k$. We have

\[
X_k = \sum_{a=1}^d E_{ia} S(Z_{ak}),
\]
where,
\[
  Z_{ak}
    := (E^{\top}M)_{ak}
     = E_{ia}M_{ik} + W_{ak},
  \qquad
  W_{ak}:=\sum_{j\neq i}E_{ja}M_{jk}.
\]

Since $X_k$ has zero mean, we have 

\begin{align*}
\E[X_k^2] &= \E\left[ \sum_{a=1}^d E_{ia}^2 + 2 \sum_{a < a'} E_{ia}S(Z_{ak}) E_{ia'}S(Z_{a'k})\right]\\
&= d + 2 \sum_{a < a'} \E\left[E_{ia}S(Z_{ak}) E_{ia'}S(Z_{a'k})\right].
\end{align*}

Fix some $a, a' \in \{1, 2, \dots, d\}$ such that $a \neq a'$. Let $\Sigma = \sigma(M)$ be the sigma-Algebra generated by $M$, and denote $S_{ak} = S(Z_{ak})$ and the same for $a'$.

Given $\Sigma$, the pair $(Z_{ak},E_{ia})$
is Gaussian with correlation
\[
  \rho_k= \textrm{Corr}(Z_{ak}, E_{ia} \mid \Sigma) = \frac{M_{ik}}{\sqrt{\sum_{j=1}^{m}M_{jk}^{2}}}
\]
Using Stein's identity,
\[
  \mu\;:=\;\mathbb E\bigl[E_{ia}\,S_{ak} \mid \Sigma\bigr]
     =\sqrt{\frac{2}{\pi}}\;\rho_k.
\]

The same holds for $a'$ and we have $E\bigl[E_{ia'}\,S_{a'k} \mid \Sigma\bigr] = \sqrt{\frac{2}{\pi}}\;\rho_k$.

Conditionally on $\Sigma$, the random variables $E_{ia} S_{ak}$ and $E_{ia'} S_{a'k}$ are independent, which yields

\[
  \mathbb E\bigl[E_{ia}S_{ak}\,E_{ia'}S_{a'k}\mid  \Sigma\bigr]
     =\frac{2}{\pi}\,\rho_k^{2}.
\]

Therefore, 
\[
  \mathbb E\bigl[E_{ia}S_{ak}\,E_{ia'}S_{a'k} \bigr]
     =\frac{2}{\pi}\,\E \rho_k^{2}.
\]

Denote $M_{j\ell} = \alpha_{i} g_{j\ell}$ for all  $j, \ell$ where $g_{j\ell} \sim \normal(0,1)$ iid. Then 
$$
\E \rho_k^2 = \alpha_i^2 \times \E \left[\frac{g_{ik}^2}{ \sum_{j=1}^m \alpha_j^2 g_{jk}^2} \right].
$$

The term $\E \left[\frac{g_{ik}^2}{ \sum_{j=1}^m \alpha_j^2 g_{jk}^2} \right]$ is invariant to permutations of the vector $(g_{1k}^2, \dots, g_{km}^2)$ which implies that $\E \rho_k^2$ is proportional to $\alpha_i^2$. By summing over $i$ we conclude that 
$$
\E \rho_k^2 = \frac{\alpha_i^2}{ \sum_{j=1}^m \alpha_j^2}.
$$

Therefore,
\[
  \mathbb E\bigl[E_{ia}S_{ak}\,E_{ia'}S_{a'k} \bigr]
     =\frac{2}{\pi}\, \frac{\alpha_i^2}{m\,  \bar{\alpha}^2}.
\]

Since there are \(d(d-1)\) off‑diagonal index pairs \((a,b)\) for a fixed
column \(k\), the off‑diagonal sum contributes

\[
  \frac{2}{\pi}\,
  \frac{\alpha_i^{2}}{\bar\alpha^{2}}\,
  \frac{d(d-1)}{m}.
\]

\smallskip
\paragraph{\textbf{3.  Independence across columns.}}
Different columns of \(M\) (different \(k\)) are independent, so
\(\operatorname{Cov}(X)\) is diagonal and each coordinate variance is the same as above.
\end{proof}

\newpage

\section{Additional Empirical Details}\label{app:emp}
\subsection{Experimental Setup for \cref{fig:no_hp_transfer_with_mup}}

\begin{itemize}
    \item Model arch: embedding layer, 2 hidden layers, projection layer
    \item Width: $d \in \{2^k, k=8, \dots, 12\}$
    \item Attention head dimension: 64
    \item MLP: Up\_proj, Down\_proj, with GeLU activation in between.
    \item Vocabulary size: 32768 (BPE Tokenizer)
    \item Training dataset: Wikitext2
    \item Batch size: 256
    \item Sequence length: 256
    \item Algorithm: Adam with constant schedule ($\eta_E$ for embed, $\eta_H$ for hidden, $\eta_{W}$ for projection)
    \item Initialization: Kaiming init ($\sigma_E$ for embed, $\sigma_H = 1/\sqrt{d}$ for hidden, $\sigma_W$)
    \item Steps: $10^4$ 
\end{itemize}

We sweep over $\eta_E$, $\eta_H$, $\eta_W$, $\sigma_E$, and $\sigma_W$ with 2 random seeds. The runs took 2 weeks to complete on a 4xGH200 node.

\subsection{Experimental Setup for \cref{fig:emb_lr_vocab_scaling}}

\begin{itemize}
    \item Model arch: embedding layer, 2 hidden layers, projection layer
    \item Width/Vocab: $(d,m) \in \{(2^k, 2^{k+3}), k=8, \dots, 12\}$
    \item Attention head dimension: 64
    \item MLP: Up\_proj, Down\_proj, with GeLU activation in between.
    \item Training dataset: Wikitext2
    \item Batch size: 256
    \item Sequence length: 256
    \item Algorithm: Adam with constant schedule ($\eta_E$ for embed, $\eta_H=0.2/d$ for hidden, $\eta_{W}=0.2/d$ for projection)
    \item Initialization: Kaiming init $\sigma_E=\sigma_W =\sigma_H = 1/\sqrt{d}$
    \item Steps: $10^4$ 
\end{itemize}

We sweep only over $\eta_E$ with 3 random seeds (with a more finegraind grid compared to the setup above). The runs took 4 days to complete on a 4xGH200 node.

\subsection{Experimental setup for 1B model pretraining}

\begin{center}
  {
\small
\begin{tabular}{l c}
\toprule
Parameter  & Value\\
\midrule
\multicolumn{2}{c}{General}\\
\midrule
\texttt{vocab\_size} & 32064\\
% \texttt{pretrain\_context\_len} & 8192\\
\texttt{n\_position} & 4096\\
\texttt{n\_layers} & 24\\
\texttt{n\_embed} & 2048\\
\texttt{normalization} & LayerNorm \\

\midrule
\multicolumn{2}{c}{Attention specific}\\
\midrule
\texttt{window\_size} & 2048\\
\texttt{n\_head} & 32\\
\texttt{n\_kv\_head} & 8\\
\texttt{head\_dim} & 128\\
\texttt{rotary\_dim} & 128 \\
\midrule
\multicolumn{2}{c}{FFN specific}\\
\midrule
\texttt{activation} & SwiGLU\\
\texttt{inner\_dim} & 8192\\

\bottomrule
\end{tabular}
}
\vspace{.2cm}
% \caption{Model architecture parameters.}
\label{tab:model_params}
% \end{table}

\captionof{table}{1B LLM Specifics.}
% \end{minipage}  
\end{center}

\end{document}